\newtheorem{theorem}{Theorem}
\newtheorem{corollary}{Corollary}
\newtheorem{lemma}{Lemma}
\newtheorem{assumption}{Assumption}
\newtheorem{remark}{Remark}
\newtheorem{note}{Note}
\newcommand{\Var}{\text{Var}}
\title{Near-Exponential Savings for Mean Estimation with Active Learning}
\author{%
  Julian M. Morimoto\thanks{Department of Statistics, University of California, Berkeley; Regulation, Evaluation, and Governance Lab, Stanford Law School; World Bank Group} \\
  \texttt{jmmorimoto@berkeley.edu} \\[1em]
  Jacob Goldin\thanks{University of Chicago; American Bar Foundation} \\
  \texttt{jsgoldin@uchicago.edu} \\[1em]
  Daniel E. Ho\thanks{Stanford University} \\
  \texttt{dho@law.stanford.edu}
}
\begin{document}

\maketitle

\begin{abstract}
    We study the problem of efficiently estimating the mean of a $k$-class random variable, $Y$, using a limited number of labels, $N$, in settings where the analyst has access to auxiliary information (i.e.: covariates) $X$ that may be informative about $Y$. We propose an active learning algorithm ("PartiBandits") to estimate $\mathbb{E}[Y]$. The algorithm  yields an estimate, $\widehat{\mu}_{\text{PB}}$, such that $\left( \widehat{\mu}_{\text{PB}} - \mathbb{E}[Y]\right)^2$ is $\tilde{\mathcal{O}}\left( \frac{\nu + \exp(c \cdot (-N/\log(N))) }{N} \right)$, where $c > 0$ is a constant and $\nu$ is the risk of the Bayes-optimal classifier. PartiBandits is essentially a two-stage algorithm. In the first stage, it learns a partition of the unlabeled data that shrinks the average conditional variance of $Y$. In the second stage it uses a UCB-style subroutine ("WarmStart-UCB") to request labels from each stratum round-by-round. Both the main algorithm's and the subroutine's convergence rates are minimax optimal in classical settings. PartiBandits bridges the UCB and disagreement-based approaches to active learning despite these two approaches being designed to tackle very different tasks. We illustrate our methods through simulation using nationwide electronic health records. Our methods can be implemented using the \textbf{PartiBandits} package in R.
\end{abstract}

\section{Introduction}

Estimating the mean of a $k$-class random variable, $Y$, with limited data from a subset of the population of interest is a pervasive problem in statistics and machine learning. 
A classical solution to this problem is to draw a simple random sample (SRS) of $N$ independent and identically distributed (IID) labels and compute the resulting sample mean. However, this may be an inefficient use of the label budget if one has information \( X \) (i.e., covariates) that may be related to \( Y \). In such cases, one approach is to leverage \( X \) to get a better estimate of $\mathbb{E}[Y]$ with fewer labels, perhaps through stratified random sampling (StRS) over $X$ and allocating the label budget across strata in proportion to how frequently each stratum occurs in the population. But in practice, there are many ways to define strata, and choosing a poor definition can result in minimal gains, or even worse performance than SRS. In general, analysts who use $X$ poorly, through stratification or otherwise, may over-sample some subpopulations and neglect others, resulting in biased or sub-optimally noisy estimates  (\textit{see, e.g.,} \cite{aznag_active_2023, henderson_beyond_2022}). This challenge has motivated the development of different adaptive sampling techniques for mean estimation (see, e.g., \cite{seber_adaptive_2015, thompson_stratified_1991}), but these approaches focus on asymptotic performance and do not address whether fast rates of convergence can be achieved in finite samples. In parallel, the active learning literature has developed strategies for learning with limited labels. While classical active learning results primarily focus on classification (\textit{see, e.g.,} \cite{puchkin_exponential_2022, hanneke_minimax_2014, hanneke_rates_2011}), recent work uses active learning to efficiently estimate subgroup (i.e. within-strata) means in settings where strata are predefined \citep{aznag_active_2023}. However, there has been no thorough exploration of active learning methods for population mean estimation when the researcher does not know an optimal stratification scheme. 
In this paper, we carry out such an exploration by developing an active learning framework for population mean estimation of $k$-class random variables, its convergence guarantees, and to what extent fast rates of convergence are achievable. 

Our main problem setup revolves around estimating the mean of a $k$-class random variable $Y$, where the analyst has access to auxiliary information $X$ that may be informative about $Y$. The analyst may adaptively choose which instances to query for their corresponding labels, $Y$, round by round, with a budget of $N$ label requests. This setup parallels the pool-based active learning setup, where the analyst observes a large collection of IID unlabeled instances $X_1, X_2, \ldots$ and sequentially selects which ones to label, ultimately giving the analyst the labeled dataset, $(X_1, Y_1), \ldots, (X_N, Y_N)$. The hope is to obtain an estimate of $\mathbb{E}[Y]$ that is closer to $\mathbb{E}[Y]$ than the SRS strategies with high probability, where the latter convergence rates are on the order of ${\mathcal{O}}\left( \frac{\Var(Y)}{N}\right)$. 

There are two important reasons why it is hard to efficiently estimate population means in this problem setup. An ideal strategy would first partition the data into strata that minimize the average within-stratum variance of $Y$, then allocate the label budget across these strata according to the Neyman allocation to minimize the variance of the mean estimate that aggregates the subgroup mean estimates (\textit{see} \cite{jo_not_2025, bosch_optimum_2003}). But in most settings, this optimal stratification is not known ahead of time. Moreover, variance within each stratum is not observed directly and must be estimated from noisy samples, so an allocation strategy that may seem optimal early on\textemdash based on preliminary variance estimates\textemdash may prove suboptimal as more data is collected. Thus, the analyst must (1) learn a good stratification from unlabeled data, and (2) decide how to allocate labels across strata adaptively in a way that reflects estimated (as opposed to oracle) variances.

\subsection{Summary of Contributions}
\label{section:summcont}
    
Our contributions are five-fold. First, we develop an active learning algorithm ("PartiBandits") for efficiently estimating the mean of a $k$-class variable $Y$. This algorithm yields an estimate, $\widehat{\mu}_{\text{PB}}$, such that $\left( \widehat{\mu}_{\text{PB}} - \mathbb{E}[Y]\right)^2$ is $\tilde{\mathcal{O}}\left( \frac{\nu + \exp(c \cdot (-N/\log(N))) }{N} \right)$, where $c > 0$ is a constant and $\nu$ is the risk of the Bayes-optimal classifier (Theorem \ref{thm:disagbridge}, Figure \ref{fig:splash}). It performs at least as well as SRS in $N$, and almost exponentially better when $X$ is predictive of $Y$ (i.e., when $\nu$ is small). It also closely resembles the exponential savings observed in disagreement-based active learning for classification \citep{puchkin_exponential_2022, hanneke_minimax_2014, hanneke_rates_2011}, even though such results do not help with the task of mean estimation of $Y$ when $X$ does not perfectly predict $Y$ (\textit{see} \cite{dong_addressing_2025}). Second, we show that if $X$ can be stratified in advance using a stratification scheme $\mathcal{G}$, the PartiBandits subroutine ("WarmStart-UCB") achieves error $\tilde{\mathcal{O}}\left( \frac{\Sigma_1(\mathcal{G})}{N} \right)$, where $\Sigma_1(\mathcal{G})$ is the average within-group variance of $Y$ (Theorem \ref{thm:varucb.adj}, Figure \ref{fig:splash}). Third, we show that both convergence rates are minimax optimal in classical settings (Theorems \ref{thm:lb1} and \ref{thm:lb2}). Fourth, we bridge a gap between Upper Confidence Bound (UCB) algorithms and disagreement-based approaches in the active learning literature despite these two approaches being developed for very different tasks (Section \ref{section:alg2}). Finally, we conduct simulation studies using real-world data from over 6 million electronic health records and find that the gains predicted by our theory for population mean estimation can be achieved even in realistic small-sample regimes (Section \ref{section:sims}). 

    \begin{figure}[h!]
        \centering
        \includegraphics[width=1\linewidth]{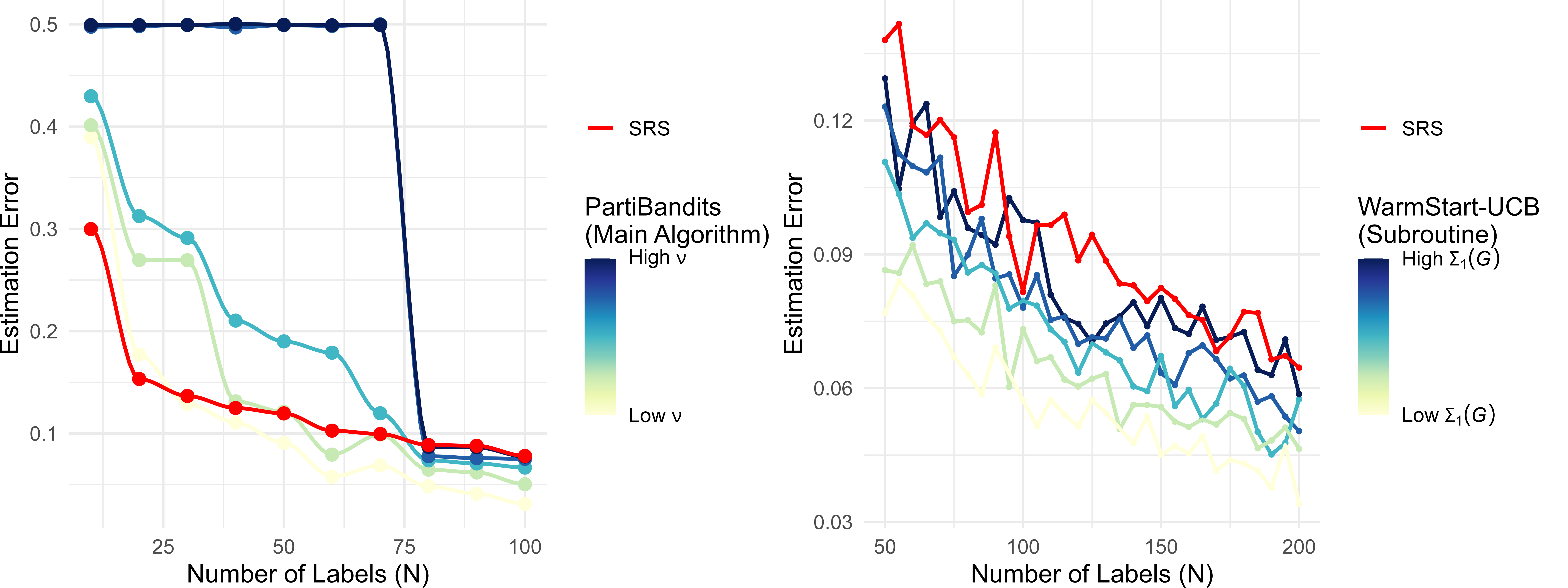}
        \caption{This plot compares the performance of PartiBandits and WarmStart-UCB, to SRS in different problem settings. The left panel compares SRS to PartiBandits for label budgets from $10$ to $100$. Here, $X \sim \text{Unif}[0,1]$ and $Y = \textbf{1}\left\{ X \geq 0.5\right\}$, with a fixed fraction of $Y$'s (between 0\% and 10\%) randomly flipped to introduce noise. The proportion of flipped labels is equal to $\nu$ by definition. For each label budget, we generate 500 hypothetical datasets in this way, apply SRS and PartiBandits to each, and compute the resulting error rates. We then take the 90th percentile of these error rates to obtain a classical 90\% high-probability/confidence bound. PartiBandits eventually outperforms SRS with relatively fewer samples, with performance gains becoming more pronounced when $X$ better predicts $Y$ and $\nu$ decreases.
        The right panel compares SRS to WarmStart-UCB for label budgets from $50$ to $200$. In this panel, $X \sim \text{Unif}[0,1]$ and $Y = \textbf{1}\left\{ X \geq 0.5\right\}$, with 5\% of the labels randomly flipped to introduce noise. We examine the effect of specifying different stratification schemes beforehand that reduce the within-group variance of $Y$ to varying degrees, where lower values of $\Sigma_1(\mathcal{G})$ indicate better average within-group variance reduction. Each scheme defines strata by applying a threshold between 0.3 and 0.5 and grouping observations based on whether $X$ falls to the left or right of the threshold. We run the same simulation procedure as above to obtain the 90\% confidence bounds. WarmStart-UCB consistently outperforms SRS, and the gap grows when stratification reduces variance more effectively (i.e., when $\Sigma_1(\mathcal{G})$ shrinks).}
        \label{fig:splash}
    \end{figure}

\section{Related Work} 

Our work builds on, and bridges, two different strands of prior work in active learning. The first is disagreement-based theory, as developed and refined by \cite{hanneke_rates_2011}. This theory was originally designed for classification where labeled data are costly but unlabeled data are abundant. In this setting, the analyst queries labels for instances drawn from a large pool, concentrating effort on regions of the input space where candidate hypotheses disagree. A defining feature of this approach is its potential for “exponential savings”, which refers to the convergence rate of excess classification error (relative to the risk of the Bayes optimal classifier) shrinking at roughly $\tilde{\mathcal{O}}\left( \exp(c \cdot (-N/\log(N)))\right)$, far faster than the $\mathcal{O}(\Var(Y) /N)$ rate typical in passive learning. While disagreement-based learning has been extensively studied in classification, it has not, to our knowledge, been applied to the problem of estimating population means. Our work is the first to show that the core insights of this framework can be used to construct stratification schemes that substantially reduce estimation variance in the mean estimation setting. 

The second strand that our work connects to is UCB-style active learning. In particular, our proposed WarmStart-UCB subroutine is closely related to the work of \cite{aznag_active_2023}, which developed a Variance-UCB algorithm for estimating the means of predefined subgroups using a fixed label budget, using upper confidence bounds on within-group variance to guide sampling. Our subroutine uses a similar approach to estimate the overall \emph{population} mean using the strata selected by the first stage of our algorithm. Additionally, we build on the \cite{aznag_active_2023} results by showing in Theorem \ref{thm:varucb.adj}, that the rate of our subroutine for estimating population means from pre-defined strata explicitly quantifies the effect of how "informative" the subgroups are for estimating the quantity of interest, something that cannot be obtained through direct application of the main \citeauthor{aznag_active_2023} results alone. Our rate also has improved dependence on key parameters such as the number of strata and $\sigma_{\min}$, the smallest conditional variance of $Y$ over all strata. 


\section{Notation and Problem Setups}

\subsection{Main Setup}

Our main problem setup is that of estimating the population mean of a $k$-class random variable, \( Y \), whose realizations come from the set $\{0, \ldots, k\}$ using a limited label budget \( N \). Where appropriate, the realizations of $Y$ may also be any set with $k$ distinct elements in $\mathbb{R}$. The analyst has abundant access to unlabeled information \( X \in \mathcal{X} \) (ex: covariates), which may be informative about \( Y \), and can choose which examples to label in order to estimate \( \mu := \mathbb{E}[Y] \). The analyst uses an algorithm to construct an estimator $\widehat{\mu}(N)$ of the population mean $\mu$, that uses auxiliary information $X$ and only $N$ labels. The goal is to minimize the squared error, $\left( \widehat{\mu}(N) - \mu \right)^2$.

We also have the following terms and notations. A \textit{hypothesis class} $\mathcal{C}$ is any set of measurable classifiers $h: \mathcal{X} \to \mathcal{R} \subset \mathbb{R}$ where $|\mathcal{R}|$ is finite. As we will show in Section \ref{section:alg2}, PartiBandits parallels classical disagreement-based active learning algorithms in that it requires $\mathcal{C}$ as an input. For any measurable $h: \mathcal{X} \to \{0, \ldots, k\}$, we define the squared loss of $h$ as \(\mathrm{er}(h) = \mathbb{E}[(h(X) - Y)^2]\). Let $\nu = \inf_{h \in \mathcal{C}} \mathrm{er}(h)$, the \textit{infimum loss} of $\mathcal{C}$.

As discussed in the Appendix, our main results still hold for other loss functions, including asymmetric misclassification costs. Such alternatives can produce more informative Bayes classifiers in applications where the ordinary squared-loss version fails to identify the threshold (such as when $\Pr(Y = 1 \mid X) \leq 1/2$ for all $X$).  

Our main assumption is the following: 

\begin{assumption}[Exponential Savings in Classification]
\label{assumption:sub}
We assume that the joint of distribution $(X,Y)$ and the hypothesis class $\mathcal{C}$ are such that an active learning algorithm, $\mathcal{S}$, can be used to learn a classifier $\hat{h}$ such that with high probability, 
\(
\mathbb{E}[(\hat{h}(X)-Y)^2] - \nu
    \;\lesssim\; \exp\left(c \cdot \tfrac{-N}{\log(N)}\right),
\)
where $N$ is the label budget, and $c > 0$ is some $N$-independent constant.
\end{assumption}

There are many problem setups in which this assumption is satisfied, as we discuss in Corollaries \ref{cor:binary-classical}\textendash \ref{cor:ambig}. 


\subsection{Setup for a PartiBandits Subroutine, WarmStart-UCB}
Additionally, PartiBandits contains a subroutine that depends on the following problem setup that builds on the one discussed above. The following notation and definitions are drawn from \cite{aznag_active_2023}.
We assume that we can partition $\mathcal{X}$ using a stratification scheme $\mathcal{G} = \{A_1, \ldots, A_G\}$, where $A_g \subseteq \mathbb{R}^d$ are disjoint. 
%
%
Let $P_g = \mathbb{P}(X \in A_g)$ and $\mu_g = \mathbb{E}[Y \mid X \in A_g] \mathbb{P}(X \in A_g)$, so the population mean is $\mu = \sum_{g=1}^G \mu_g$. We define $\sigma_g^2 := \mathrm{Var}(P_g \cdot Y \mid X \in A_g)$, which equals $P_g^2 \cdot \sigma_g'^2$, where $\sigma_g'^2 := \mathrm{Var}(Y \mid X \in A_g)$ is the unweighted conditional variance of $Y$ given $X \in A_g$. The distribution of $X$ is assumed to be virtually known (as is the case in classical active learning setups), so $P_g$ is also known to the analyst. $\Sigma_{1}(\mathcal{G})$ is the average within-group variance of $Y$, $ \sum_{g \in [G]} \sigma_{g}'^{2} P_{g}$. The analyst wishes to compute an unbiased estimate of the population mean with $N$ label requests, sampling only one group from $\{A_1, \ldots, A_G\}$ at a time. The set of feasible policies for estimating $\mu$ is defined as \(\Pi := \left\{ \pi = \{\pi_t\}_{t \in [N]} \mid \pi_t \in G^{t-1} \times \mathbb{R}^{t-1} \rightarrow \Delta(G), \ \forall t \in [N] \right\},\) where \( \Delta(G) \) is the set of measures supported on \( [G] \). For some policy $\pi \in \Pi$, let \( n_{g,N}(\pi) \) denote the number of collected samples from group \( A_g \) after \( N \) label requests by way of policy $\pi \in \Pi$, and let \( \hat{\mu}_{g,N}(\pi) \) be the weighted sample mean estimator of \( \mu_g \) for \( n_{g,T}(\pi) \) collected samples, that is: 
\(
\widehat{\mu}_{g,N}(\pi) := \frac{1}{n_{g,N}(\pi)} \sum_{t : X_t \in A_g} Y_t \cdot P_g.
\) 
Once all data have been collected using the full label budget, \( N \), the analyst will compute the population mean by aggregating the subgroup mean estimates obtained from the policy $\pi \in \Pi$: 
\(
\widehat{\mu}(\pi, N) = \sum_g \hat{\mu}_{g,N}(\pi).
\)
Going forward, we drop the explicit dependence on \( \pi \) and $N$ in the notation when the policy is clear from context. As long as at least one sample is collected from each group, each $\widehat{\mu}_{g,N}$ is an unbiased estimator of $\mu_g = \mathbb{E}[Y \mid X \in A_g] \cdot P_g$, and hence the aggregated mean estimator $\widehat{\mu}$ is an unbiased estimator of the population mean $\mu$. The analyst wishes to obtain a high probability bound on the variance of $\widehat{\mu}$. 
%
The analyst does not know the true standard deviation vector \( \boldsymbol{\sigma} := (\sigma_1, \ldots, \sigma_G) \), which can be used to obtain an upper bound on the variance of \( \widehat{\mu} \) via
\(
\mathrm{Var}(\widehat{\mu}) = \mathrm{Var}\left( \sum_{g=1}^G \widehat{\mu}_{g,N} \right) \leq \sum_{g=1}^G \frac{\sigma_g^2}{n_{g,N}}.
\)
Therefore the analyst must learn \( \boldsymbol{\sigma} \) through their decisions. We define the regret of a policy as
\(
\text{Regret}_{N}(\pi) := \left(\widehat{\mu}(\pi, N) - \mu \right)^2.
\)

\section{Our Algorithms and Performance Guarantees}

We now discuss our algorithms and their performance guarantees in turn. The proofs are in the Appendix. PartiBandits is our main algorithm, but since it incorporates a UCB-style subroutine, WarmStart-UCB, we first analyze the subroutine. 

\subsection{PartiBandits Subroutine: WarmStart-UCB}

The first Algorithm is similar to the Variance-UCB algorithm of \cite{aznag_active_2023} except that we include an initial "warm-start" step (Step 1). We estimate $\sigma_g$ via the sample standard deviation, 
\(
\widehat{\sigma}_{g,t} := \sqrt{ \frac{1}{n_{g,t} - 1} \sum_{s \leq t : X_s = g} (P_g Y_s - \widehat{\mu}_{g,t})^2 }.
\)
We can then define
\(
    \mathrm{UCB}_t(\sigma_g) := \widehat{\sigma}_{g,t} + \frac{C_N(\delta)}{\sqrt{n_{g,t}}},
\)
where
\(
C_N(\delta) := 2\sqrt{2 c_1 \log\left(\frac{2}{\delta}\right)\log\left(\frac{c_2}{\delta}\right)} + \frac{2 \sqrt{c_1 \log\left(\frac{2}{\delta}\right)(1 + c_2 + \log\left(\frac{c_2}{\delta}\right))}}{(1 - \delta)\sqrt{2 \log\left(\frac{2}{\delta}\right)}} \cdot \frac{1}{N^2}.
\)
In $C_N(\delta)$, $c_1$ and $c_2$ are constant upper bounds on the sub-gaussian parameters of $Y$ (which exist since $Y$ is $k$-class), and $\delta \in (0,1)$ is parameter representing the confidence level for obtaining a high probability bound. WarmStart-UCB selects at each round the group with the largest upper confidence bound on its variance estimate, but begins with a "warm-start" phase that allocates a fixed fraction of the label budget, $\tau$, evenly across all groups (initiated by Step 1).

\begin{algorithm}
\caption{WarmStart-UCB}
\label{alg:one}
\begin{algorithmic}[1]
\Require Label budget $N$, stratification scheme $\mathcal{G}$, confidence level $\delta$, buffer fraction \(\tau\)
\State Initialize $n_{g,0} = 0$, and $\hat{\sigma}_{g,t} = +\infty$ for all $g \in [G]$ and $t \leq \frac{\tau}{G}N$
\State Compute $C_N(\delta)$
\For{$t = 0, \ldots, N-1$}
    \State Compute $\mathrm{UCB}_t(\sigma_g) = \hat{\sigma}_{g,t} + \frac{C_N}{\sqrt{n_{g,t}}},\quad \forall g \in [G]$
    \State Select group $X_{t+1} = \arg\max_g \frac{\mathrm{UCB}_t(\sigma_g)}{n_{g,t}}$
    \State Observe feedback $Y_{t+1}$
    \State Update the number of samples: $n_{g,t+1} = n_{g,t} + \mathbf{1}_{X_{t+1} = g},\quad \forall g \in [G]$
    \State Update the mean estimates,
    \(
        \hat{\mu}_{g,t+1} = \frac{1}{n_{g,t+1}} \sum_{s=1}^{t+1} \mathbf{1}_{X_s = g} \cdot P_g Y_s, \quad \forall g \in [G]
    \)
    \State Update the standard deviation estimates, 
    
    \(
        \hat{\sigma}_{g,t+1} = \sqrt{\frac{1}{n_{g,t+1} - 1} \sum_{s \leq t+1 : X_s = g} (P_g Y_s - \hat{\mu}_{g,t+1})^2}, \quad \forall g \in [G]
    \)
\EndFor
\Statex \textbf{Output:} $\widehat{\mu}_{\text{WS-UCB}}(N) = \sum_g \hat{\mu}_{g,N}$.
\end{algorithmic}
\end{algorithm}

The following is an upper bound on the performance of WarmStart-UCB. 

\begin{theorem}
\label{thm:varucb.adj}
$\left| \widehat{\mu}_{\text{WS-UCB}} - \mathbb{E}[Y] \right|^2  = \tilde{\mathcal{O}}\left( \frac{\Sigma_1(\mathcal{G})}{N} \right)$.
\end{theorem}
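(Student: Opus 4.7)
The plan is to reduce the claim to a variance bound on $\widehat{\mu}_{\text{WS-UCB}}$ and then show that the WarmStart-UCB sampling rule realizes (approximately) the Neyman allocation, after which a Cauchy-Schwarz step converts the resulting Neyman-style expression into $\Sigma_1(\mathcal{G})/N$.

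First, I would set up the high-probability framework. Under a "good event" on which (i) the warm-start phase ensures $n_{g,N} \geq \tau N/G$ for every $g$, and (ii) the empirical standard deviations $\widehat{\sigma}_{g,t}$ concentrate around the true $\sigma_g$ at rate $C_N(\delta)/\sqrt{n_{g,t}}$, the estimator $\widehat{\mu}_{\text{WS-UCB}} = \sum_g \widehat{\mu}_{g,N}$ is unbiased for $\mu$, so $(\widehat{\mu}_{\text{WS-UCB}} - \mu)^2$ reduces to bounding its variance. Using independence across strata and the given identity $\sigma_g^2 = P_g^2 \sigma_g'^2$, we have $\mathrm{Var}(\widehat{\mu}_{\text{WS-UCB}}) \leq \sum_g \sigma_g^2/n_{g,N}$, so everything hinges on controlling $n_{g,N}$. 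A standard Chebyshev/Bernstein step then upgrades the variance bound to a high-probability squared-error bound.

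The main algorithmic analysis would follow the template of the Variance-UCB regret argument in \cite{aznag_active_2023}. The UCB selection rule guarantees that on the concentration event, the number of samples drawn from group $g$ after round $N$ satisfies $n_{g,N} \gtrsim N \cdot \sigma_g/\bigl(\sum_{g'} \sigma_{g'}\bigr)$, up to the warm-start floor $\tau N/G$ and log factors from $C_N(\delta)$. Plugging this back gives
\[
\sum_g \frac{\sigma_g^2}{n_{g,N}} \;\lesssim\; \frac{\bigl(\sum_g \sigma_g\bigr)^2}{N} + \text{(lower-order $\log$ terms)}.
\]
Two adjustments to the Aznag et al. argument are needed: the warm-start phase must be shown to contribute only an $O(1/N)$-type lower-order term (since $\tau$ is a constant fraction of the budget but spread evenly), and the dependence on $G$ and $\sigma_{\min}$ in the lower-order remainder needs to be tracked carefully so it remains subdominant to $\Sigma_1(\mathcal{G})/N$.

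Finally, the key conversion from the Neyman-allocation expression to the stated bound uses that $\sigma_g = P_g \sigma_g'$ and Cauchy-Schwarz against the probability measure $\{P_g\}$:
\[
\Bigl(\sum_{g=1}^G \sigma_g\Bigr)^2 \;=\; \Bigl(\sum_{g=1}^G P_g \sigma_g'\Bigr)^2 \;\leq\; \Bigl(\sum_{g=1}^G P_g\Bigr)\Bigl(\sum_{g=1}^G P_g \sigma_g'^{\,2}\Bigr) \;=\; \Sigma_1(\mathcal{G}).
\]
Combining this with the previous display yields the claimed $\tilde{\mathcal{O}}(\Sigma_1(\mathcal{G})/N)$ bound on $(\widehat{\mu}_{\text{WS-UCB}}-\mu)^2$. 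I expect the main obstacle to be the bookkeeping in the UCB regret step, specifically showing that (a) the warm-start floor is simultaneously large enough to make every $\widehat{\sigma}_{g,t}$ trustworthy yet small enough to not spoil the Neyman rate, and (b) the remainder terms (which scale with $G$, $\sigma_{\min}^{-1}$, and $\log N$) can be absorbed into the $\tilde{\mathcal{O}}$ so that the leading constant depends only on $\Sigma_1(\mathcal{G})$. This is precisely where the improved dependence on $G$ and $\sigma_{\min}$ advertised in the contributions section must be delivered.
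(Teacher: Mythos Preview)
Your overall architecture matches the paper's: reduce to bounding $R_1(n)=\sum_g \sigma_g^2/n_{g,N}$ via a Hoeffding step, invoke the Aznag--Cummings--Elmachtoub regret bound to compare $R_1(n)$ to the Neyman optimum $R_1(n^*)=(\sum_g\sigma_g)^2/N$, and then convert $(\sum_g\sigma_g)^2$ into $\Sigma_1(\mathcal G)$. Your Cauchy--Schwarz conversion $(\sum_g P_g\sigma_g')^2\le \Sigma_1(\mathcal G)$ is in fact tighter than the paper's Lemma~\ref{lemma:ronestar}, which uses $\ell^1$--$\ell^2$ norm equivalence and picks up an extra factor of $G$.

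The gap is in your plan for the $\sigma_{\min}$ dependence. You write that the remainder terms ``which scale with $G$, $\sigma_{\min}^{-1}$, and $\log N$'' should be ``absorbed into the $\tilde{\mathcal O}$''. That will not work: the Aznag et al.\ bound on $(R_1(n)-R_1(n^*))/R_1(n^*)$ contains terms like $\Sigma_1^2/\sigma_{\min}^2$, and $\tilde{\mathcal O}$ hides only polylogarithmic factors, not $\sigma_{\min}^{-1}$ which can be arbitrarily large. The paper's resolution is not absorption but a \emph{two-regime case split}. For $N$ exceeding an explicit threshold $C(G,\Sigma_1,\sigma_{\min})$ (large enough that all the $\sigma_{\min}$-dependent conditions of the UCB analysis kick in), the Aznag bound gives $R_1(n)\le C' R_1(n^*)$ with $C'$ an absolute constant. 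For $N$ below that threshold, the paper \emph{abandons} the UCB analysis entirely and uses the warm-start floor $n_{g,N}\ge \tau N/G$ directly to get $R_1(n)\le (G/\tau)\sum_g\sigma_g^2/N\le (G/\tau)\Sigma_1(\mathcal G)/N$. This is the actual role of the warm start: not to make $\widehat\sigma_{g,t}$ trustworthy or to be a lower-order correction, but to provide a fallback allocation bound that is $\sigma_{\min}$-free and holds for \emph{all} $N$. The two regimes together give a bound uniform in $N$ with no $\sigma_{\min}$ dependence, at the cost of the constant $G/\tau$ (which the paper then suppresses).
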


Theorem \ref{thm:varucb.adj} shows that when a stratification scheme $\mathcal{G}$ is given \textit{a priori}, WarmStart-UCB efficiently estimates the population mean with error scaling as $\tilde{\mathcal{O}}\left( \frac{\Sigma_1(\mathcal{G})}{N} \right)$, where $\Sigma_1(\mathcal{G})$ captures how informative the grouping is. The more informative the grouping, the smaller $\Sigma_1(\mathcal{G})$ is, and the faster the rate of convergence. By the law of total variance, $\Sigma_1(\mathcal{G}) \leq \Var(Y)$, so this rate is always at least as fast as that obtained with SRS. The proof is relatively straightforward. In Section A.5 of their work, \cite{aznag_active_2023} showed that \(\frac{R_1(n) - R_1(n^*)}{R_1(n^*)} = \tilde{\mathcal{O}}(1/N)\). We do not define $R_1(n)$ and $R_1(n^*)$ explicitly here as this would involve significant technical detail. However, we show in the Appendix that $R_1(n)$ is equivalent to the variance of $\widehat{\mu}_{\text{WS-UCB}}$, while $R_1(n^*)$ corresponds to $\Sigma_1(\mathcal{G})/N$. This identification allows us to directly leverage this bound from Section A.5 of \cite{aznag_active_2023}. We then calculate how large $N$ must be in order for this quotient to be bounded from above by some constant (though this threshold is quite large, as it is inversely proportional to $\sigma_{\min}$), and we get a bound on the variance of $\widehat{\mu}_1$ for sufficiently large $N$. For all other $N$, we use the fact that a minimum fraction of the label budget is allocated to each group and obtain a similar high probability bound using classical Hoeffding arguments. This is why $\tau$ and the WarmStart step are important, as they safeguard the Variance-UCB procedure by ensuring that part of the label budget is allocated to StRS (every group gets a minimum number of samples), and this allows for nice convergence guarantees even when the proper rate of \cite{aznag_active_2023} does not hold. This allows us to obtain an analogous rate for label budgets that do not meet the threshold, thereby eliminating the counterintuitive dependence on $\sigma_{\min}$ that is typical in the active learning literature \citep{aznag_active_2023, carpentier_upper-confidence-bound_2015}. This ensures that our rate holds uniformly over all label budgets and constitutes a proper non-asymptotic, high-probability bound. 

We note in the Appendix that when the dependence on the constants $\tau$ and $G$ is made explicit, the rate is $\left| \widehat{\mu}_{\text{WS-UCB}} - \mathbb{E}[Y] \right|^2  = \tilde{\mathcal{O}}\left( \frac{G \cdot \Sigma_1(\mathcal{G})}{N \cdot \tau} \right)$; however, we follow \cite{aznag_active_2023} in treating $G$ as a constant, and do the same for $\tau$. As we allude to above and show in the Appendix, the dependence on $\tau$ vanishes for large $N$ relative to $\sigma_{\min}$, and for all other $N$, the rate still holds with slightly larger constants (including a constant factor of $\tau^{-1}$). We also discuss in the Appendix the special cases when $\tau \in \{0,1\}$. 

While \cite{aznag_active_2023} established a rate of $\tilde{\mathcal{O}(N^{-2}})$ for the task of multi-group mean estimation (which can be extended to the task of population mean estimation) for a particular regret definition, the upper bound for WarmStart-UCB both (1) explicitly accounts for the signal of $Y$ in $X$ through  $\Sigma_1(\mathcal{G})$, and (2) exhibits tighter dependence on the number of groups, $G$, and eliminates the dependence on the smallest conditional variance of $Y$ across all subgroups, $\sigma_{\min}$. The latter result in particular addresses an open problem in the active learning literature on mean estimation by demonstrating that not all active learning mean estimation frameworks result in the counterintuitive inverse dependence on $\sigma_{\min}$ (see, \textit{e.g.}, \cite{aznag_active_2023, carpentier_upper-confidence-bound_2015, ganti_building_2013}). While these improvements to the rate of convergence come at the cost of slower dependence on the label budget $N$ (from $N^{-2}$ to $N^{-1}$), this is expected as the $\tilde{\mathcal{O}(N^{-2}})$ of \cite{aznag_active_2023} is for a different definition of regret than the one we are interested in here, $\left(\widehat{\mu} - \mu \right)^2$.

The following provides a matching lower bound.

\begin{theorem}[Lower Bound for WarmStart-UCB]
\label{thm:lb1}
Let $X \sim \text{Unif}[0,1]$ and $Y = \mathbf{1}\{X \geq t\}$ for some $ t\in [0,1]$. Assume that a $\rho_{\leq}$-fraction of labels of $Y$ is flipped at random over $X \leq t$, and analogously with $\rho_{>}$ for $X > t$ and that $\rho_{\leq}, \rho_{>} < 1/4$. The stratification scheme $\mathcal{G}$ partitions the covariate space at the true threshold (i.e., groups $X < t$ and $X \geq t$). Then,
\[
\left| \widehat{\mu}' - \mathbb{E}[Y] \right|^2  \geq c_1 \frac{\Sigma_1(\mathcal{G})}{N} 
\]
for some constant $c_1 > 0$ and all estimators $\widehat{\mu}'$ of $\mathbb{E}[Y]$ in this setup. 

\end{theorem}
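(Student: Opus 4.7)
The plan is to establish this lower bound via Le Cam's two-point method applied to small perturbations of the noise rates $\rho_{\leq}$ and $\rho_{>}$, followed by an elementary algebraic reduction to $\Sigma_{1}(\mathcal{G})$. In this setup, $Y \mid X \in A_{1} \sim \mathrm{Bern}(\rho_{\leq})$ and $Y \mid X \in A_{2} \sim \mathrm{Bern}(1-\rho_{>})$, with $P_{1} = t$, $P_{2} = 1-t$, and $\sigma_{g}'^{2} = \mathrm{Var}(Y \mid X \in A_g)$. The algorithm's task is therefore equivalent to estimating a known linear combination $\mu = P_{1}q_{1} + P_{2}q_{2}$ of two Bernoulli parameters from $N$ adaptively chosen samples.

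For each $i \in \{1,2\}$, I would pair the given instance $\theta_{0}$ against a perturbed instance $\theta_{i}$ that differs only in coordinate $i$ by a small $\epsilon_{i} > 0$, so that $|\mu_{\theta_{0}} - \mu_{\theta_{i}}| = P_{i}\epsilon_{i}$. Because $X_{t}$ is chosen by a policy whose conditional law given the history does not depend on the noise parameters, the chain rule for KL divergence yields
\[
\mathrm{KL}\bigl(\mathbb{P}_{\theta_{0}}^{N} \,\big\|\, \mathbb{P}_{\theta_{i}}^{N}\bigr) \;=\; \mathbb{E}_{\theta_{0}}[n_{i,N}] \cdot \mathrm{KL}\bigl(\mathrm{Bern}(q_{i}) \,\big\|\, \mathrm{Bern}(q_{i}+\epsilon_{i})\bigr) \;\le\; \frac{C N \epsilon_{i}^{2}}{\sigma_{i}'^{2}},
\]
where the final inequality uses $\mathbb{E}[n_{i,N}] \le N$ together with the standard quadratic upper bound on the Bernoulli KL divergence (valid with an absolute constant $C$ since $\rho_{\leq}, \rho_{>} < 1/4$ keeps both $q_{i}$ and $q_{i}+\epsilon_{i}$ bounded away from $\{0,1\}$ for $\epsilon_i$ small enough). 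Setting $\epsilon_{i}^{2} = \sigma_{i}'^{2}/(2CN)$ makes the KL at most $1/2$, and Le Cam's testing inequality then gives $\inf_{\widehat\mu'} \max_{\theta \in \{\theta_{0},\theta_{i}\}} \mathbb{E}_{\theta}[(\widehat\mu' - \mu_{\theta})^{2}] \gtrsim (P_{i}\epsilon_{i})^{2} \asymp P_{i}^{2}\sigma_{i}'^{2}/N$.

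Taking the maximum over $i \in \{1,2\}$ and applying the elementary chain
\[
\max\bigl\{P_{1}^{2}\sigma_{1}'^{2},\, P_{2}^{2}\sigma_{2}'^{2}\bigr\} \;\ge\; \tfrac{1}{2}\bigl(P_{1}^{2}\sigma_{1}'^{2} + P_{2}^{2}\sigma_{2}'^{2}\bigr) \;\ge\; \tfrac{\min(t,1-t)}{2}\bigl(P_{1}\sigma_{1}'^{2} + P_{2}\sigma_{2}'^{2}\bigr) \;=\; \tfrac{\min(t,1-t)}{2}\,\Sigma_{1}(\mathcal{G})
\]
absorbs the $t$-dependence into $c_{1}$ and yields $|\widehat\mu' - \mathbb{E}[Y]|^{2} \gtrsim \Sigma_{1}(\mathcal{G})/N$, as claimed. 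The main obstacle is the careful handling of adaptivity in the KL bound: since $n_{i,N}$ is a random quantity determined by the observed labels, one must invoke the chain-rule identity for adaptive experiments and verify that only the conditional laws of $Y_{t}\mid X_{t}$ carry $\theta$-dependence (the $X_{t}$'s are drawn from distributions that depend on the policy and the known $P_{g}$'s, but not on $\theta$). Once that point is pinned down, the remaining steps reduce to routine Bernoulli KL computations and the algebraic reduction above.
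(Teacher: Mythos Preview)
Your proposal is correct and essentially equivalent in spirit to the paper's argument, though considerably more explicit. The paper's proof is a two-sentence sketch: it observes that the given stratification $\mathcal{G}$ minimizes $\Sigma_1(\mathcal{G}')$ over all stratifications (since the conditional law of $Y$ given $X$ is constant on each side of the threshold), and then invokes a classical Bernoulli mean-estimation lower bound (Lemma~1 of \cite{hanneke_negative_2010}) as a black box to conclude that each within-stratum mean cannot be estimated faster than $\sigma_g'^2/n_g$. Your route instead builds the two-point Le~Cam construction from scratch, perturbing one noise rate at a time and handling the adaptivity of the sampling policy explicitly through the divergence-decomposition identity $\mathrm{KL}(\mathbb{P}_{\theta_0}^N\|\mathbb{P}_{\theta_i}^N)=\mathbb{E}_{\theta_0}[n_{i,N}]\cdot\mathrm{KL}(\mathrm{Bern}(q_i)\|\mathrm{Bern}(q_i+\epsilon_i))$.

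The main difference in what each buys: your argument is self-contained and makes the role of adaptivity transparent, whereas the paper's sketch outsources that work to a cited lemma. Your approach also sidesteps the need to argue that $\mathcal{G}$ is the \emph{optimal} stratification---you lower-bound the error directly in terms of the given $\Sigma_1(\mathcal{G})$ via the algebraic chain $\max_i P_i^2\sigma_i'^2 \geq \tfrac{\min(t,1-t)}{2}\Sigma_1(\mathcal{G})$. The price is that your constant $c_1$ visibly depends on $\min(t,1-t)$, which is harmless here since $t$ is fixed in the setup, but worth noting explicitly.
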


Since this lower bound matches the upper bound of Theorem \ref{thm:varucb.adj}, we have that the rate of Theorem \ref{thm:varucb.adj} is minimax optimal in this classical setting. This lower bound is based on the classical threshold example where the stratification scheme is such that the strata are chosen according to the decision boundary, and represents a favorable case where $X$ is highly predictive of $Y$ and the analyst has knowledge of how to group observations to reduce within-stratum variance—exactly the kind of setting any analyst would hope to operate in. The main point to note about this lower bound is that when the stratification scheme is well-chosen, the dependence in $N$ is still on the order of $1/N$. This will be important in the discussion of the lower bound for the main PartiBandits algorithm (Theorem \ref{thm:lb2}).

\subsection{PartiBandits}
\label{section:alg2}
We now present our main algorithm, PartiBandits (Algorithm \ref{alg:two}). 

\begin{algorithm}
\caption{PartiBandits}
\label{alg:two}
\begin{algorithmic}[1]
\Require hypothesis class \( \mathcal{C} \), active learning classification algorithm \(\mathcal{S}\), label budget \( N \), confidence level \( \delta \), buffer fraction \(\tau\).
\State \textbf{Stage 1: Learn stratification using $\mathcal{S}$}
\State Run $\mathcal{S}$ with hypothesis class $\mathcal{C}$, label budget $\lfloor N/2 \rfloor$, and confidence level $\delta$ to obtain classifier $\widehat{h}$
\State Construct a stratification scheme $\mathcal{G}$ by defining $A_i = \widehat{h}^{-1}(i)$ for all $i \in \text{Im}(\widehat{h})$ and setting $\mathcal{G} = \{A_i\}_i$
\vspace{1em}
\State \textbf{Stage 2: Apply Stratified Sampling Subroutine (WarmStart-UCB) to estimate means over \( \mathcal{G} \)}
\State Run WarmStart-UCB with label budget $N - \lfloor N/2 \rfloor$, stratification scheme $\mathcal{G}$ and buffer fraction $\tau$
\Statex \textbf{Output:} $\widehat{\mu}_{\text{PB}} = \sum_g \hat{\mu}_{g,N}$.
\end{algorithmic}
\end{algorithm}

It is essentially a two-stage algorithm. In the first stage, it runs a disagreement-based algorithm, $\mathcal{S}$, that the analyst chooses. $\mathcal{S}$ helps identify a partition of the unlabeled data that shrinks the average conditional variance of $Y$. In the second stage, it runs the WarmStart-UCB subroutine on that learned stratification. Examples of $\mathcal{S}$ to handle the case when $Y$ is binary ($k = 1$) include the $A^2$ algorithm of \cite{balcan_agnostic_2006} and Algorithm 1 of \cite{puchkin_exponential_2022}. For the multiclass setting ($k>1$), one may instead use algorithms such as Algorithm 1 of \cite{agarwal_selective_2013} to learn a partition of the unlabeled data reduces the mean conditional variance of $Y$.
In Algorithm \ref{alg:two} we present PartiBandits with a general choice of $\mathcal{S}$, and show in Theorem \ref{thm:disagbridge} that it can achieve near-exponential savings whenever Assumption \ref{assumption:sub} is satisfied given the data-generating process, hypothesis class, and the choice of $\mathcal{S}$. We then illustrate in Corollaries \ref{cor:binary-classical}-\ref{cor:ambig} how different choices of S can accommodate different data-generating processes (e.g., binary vs. multiclass $Y$) and assumptions about the problem setup (such as the hard margin condition or the assumption that the Bayes optimal classifier is in the hypothesis class). The main theorem and its corollaries show that PartiBandits allows efficient mean estimation for multiclass outcomes across a wide range of structural assumptions and problem settings.

The following is an upper bound on the performance of Algorithm \ref{alg:two}.

\begin{theorem}
\label{thm:disagbridge}
For any joint distribution of $(X,Y)$, hypothesis class $\mathcal{C}$, and  $\mathcal{S}$ such that Assumption~\ref{assumption:sub} holds, we have
\[
    \left| \widehat{\mu}_{\text{PB}} - \mathbb{E}[Y] \right|^2 = \tilde{\mathcal{O}}\left( \frac{\nu + \exp(c \cdot (-N/\log(N)))}{N} \right),
\]
where $c > 0$ is a constant.

\end{theorem}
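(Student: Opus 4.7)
The plan is to combine Assumption~\ref{assumption:sub}, which governs Stage 1, with Theorem~\ref{thm:varucb.adj}, which governs Stage 2, by means of a single bridging inequality that bounds the average within-stratum variance $\Sigma_1(\mathcal{G})$ by the squared loss of $\widehat{h}$. The key observation is that when $\mathcal{G}$ is the partition induced by a classifier $\widehat{h}$, that classifier is constant on each stratum $A_g = \widehat{h}^{-1}(i_g)$, so
\[
\mathbb{E}\bigl[(Y - \widehat{h}(X))^2 \mid X \in A_g\bigr] = \mathrm{Var}(Y \mid X \in A_g) + \bigl(\mathbb{E}[Y \mid X \in A_g] - i_g\bigr)^2 \geq \mathrm{Var}(Y \mid X \in A_g).
\]
Averaging over $g$ with weights $P_g$ yields the deterministic inequality $\Sigma_1(\mathcal{G}) \leq \mathrm{er}(\widehat{h})$, valid for any classifier whose level sets define $\mathcal{G}$. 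This is the only new piece of analysis that the theorem truly requires; everything else is an invocation of results already at hand.

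With this bridge in place, I would carry out three steps. First, invoke Assumption~\ref{assumption:sub} for Stage 1 with label budget $\lfloor N/2 \rfloor$: with high probability, $\mathrm{er}(\widehat{h}) - \nu \lesssim \exp\bigl(-c' N/\log N\bigr)$ for some constant $c' > 0$ (the factor of $1/2$ in the budget and the associated logarithmic correction are absorbed into $c'$). Applying the bridging inequality then gives $\Sigma_1(\mathcal{G}) \leq \nu + \tilde{\mathcal{O}}(\exp(-c' N/\log N))$ on this event. Second, condition on $\widehat{h}$ and apply Theorem~\ref{thm:varucb.adj} to Stage 2 with the learned stratification and budget $N - \lfloor N/2 \rfloor$; because Stage 2 queries fresh labels from the unlabeled pool, its conditional guarantee $|\widehat{\mu}_{\text{PB}} - \mathbb{E}[Y]|^2 = \tilde{\mathcal{O}}(\Sigma_1(\mathcal{G})/N)$ holds with high probability. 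Third, union-bound over the two high-probability events and substitute the upper bound on $\Sigma_1(\mathcal{G})$ to obtain the claimed rate.

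The main obstacle is that $\mathcal{G}$ is itself random, being a function of Stage 1's output, so Theorem~\ref{thm:varucb.adj} cannot be applied as if $\mathcal{G}$ were a fixed stratification known in advance. I would handle this by conditioning on $\widehat{h}$: since Stage 2 draws new labels from the pool, its guarantee applies conditionally with $\Sigma_1(\mathcal{G})$ treated as a deterministic quantity after conditioning. A minor technical loose end is confirming that the hidden constants in Theorem~\ref{thm:varucb.adj}, which depend on the number of strata $G$ and the buffer fraction $\tau$, remain bounded uniformly over Stage 1 outputs; this follows because any $h \in \mathcal{C}$ maps into a finite range $\mathcal{R}$, so $G \leq |\mathrm{Im}(\widehat{h})| \leq |\mathcal{R}|$, and $\tau$ is a fixed user-chosen constant. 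Everything else reduces to bookkeeping on the logarithmic factors absorbed by $\tilde{\mathcal{O}}$.
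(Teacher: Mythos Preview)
Your proposal is correct and follows essentially the same route as the paper: the core step is the bias--variance decomposition showing $\Sigma_1(\mathcal{G}) \leq \mathrm{er}(\widehat{h})$, after which Assumption~\ref{assumption:sub} and Theorem~\ref{thm:varucb.adj} are chained exactly as you describe. Your explicit conditioning argument to handle the randomness of $\mathcal{G}$ and your check that $G \leq |\mathcal{R}|$ are in fact more careful than the paper's own proof, which simply invokes Theorem~\ref{thm:varucb.adj} on the learned stratification without commenting on either point.
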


We note in the Appendix that when the dependence on $\tau$ and $\mathcal{G}$ is made explicit, the rate is $\left| \widehat{\mu}_{\text{PB}} - \mathbb{E}[Y] \right|^2
    = \tilde{\mathcal{O}}\left( \left| \mathcal{G} \right| \cdot  \left( \frac{\nu + \exp(c \cdot (-N/\log(N)))}{N \cdot \tau} \right) \right)$ where $\left| \mathcal{G} \right|$ is the number of strata. 
    Theorem \ref{thm:disagbridge} shows that PartiBandits efficiently estimates $\mathbb{E}[Y]$ by learning a stratification scheme $\mathcal{G}$ of $Y$ that yields an average within-stratum variance, $\Sigma_1(\mathcal{G})$, that is bounded from above by $\nu + \exp(c \cdot (-N/\log(N)))$. Asymptotically, this rate is faster than\textemdash or at least as fast as\textemdash the $1/N$ decay achieved by classical adaptive sampling methods \citep{felix-medina_asymptotics_2003, thompson_stratified_1991}, since our bound decays at the rate of roughly $ \exp(-cN/\log N)/N$ when $\nu$ is small. 

\textit{Result Intuition.} 
Disagreement-based active learning algorithms effectively learn a
stratification scheme where within-group variance is reduced, since the labels in each stratum (i.e., strata induced by the inverse mapping of the classifier’s prediction function) will tend to concentrate around a single class. Furthermore, they can do this with very few labels. Because active learning algorithms can identify these low-variance strata rapidly, we can then perform an adaptive stratified sampling procedure (Algorithm \ref{alg:one}) using the learned stratification to estimate the population mean. Since estimation error depends primarily on the average within-group variance, reducing that variance quickly leads to a correspondingly fast decline in estimation error. We discuss in Corollary \ref{cor:ambig} how PartiBandits can further decompose relatively homogenous strata into sub-strata with higher and lower conditional variances, which allows the algorithm to allocate more samples to more heterogeneous sub-strata, yielding even better estimates of the population mean.

\textit{Proof Sketch.} If Assumption \ref{assumption:sub} is satisfied, then there is an active learning algorithm, $\mathcal{S}$, such that when $\mathcal{S}$ is used in Step 2 of PartiBandits, we obtain a classifier, $\widehat{h}$, whose excess risk decays at an exponential rate, \(\mathbb{E}[(\widehat{h}(X)-Y)^2] - \nu \lesssim \exp(c \cdot (-N / \log N))\) for some constant $c>0$. We can then show that the variance of the mean estimate, $\widehat{\mu}_{\text{PB}}$, is bounded from above by $\mathbb{E}[(\widehat{h}(X)-Y)^2]$. In particular, we first use the law of total expectation to show that $\mathbb{E}[(\widehat{h}(X)-Y)^2] = \sum_{j \in J} \mathbb{E}[(j-Y)^2 \mid \widehat{h}(X)=j]\Pr(\widehat{h}(X)=j)$, where $J$ is the image of $\widehat{h}$. Then we use the Bias-Variance decomposition to show that the latter quantity is equal to $\sum_{j \in J} (\Var(Y \mid \widehat{h}=j) + (j-\mathbb{E}[Y \mid \widehat{h}=j])^2)\Pr(\widehat{h}=j)$. Then it easily follows that this quantity is an upper bound on the average within-group variance of $Y$ using the stratification induced by $\widehat{h}$, and therefore an upper bound on the variance (and therefore the estimation error) of $\widehat{\mu}_{\text{PB}}$. 


The constant $c$ depends on $\mathcal{C}$'s VC-dimension and disagreement coefficient \citep{hanneke_rates_2011}. PartiBandits yields better mean estimates with smaller label budgets if $\mathcal{C}$ is well constructed and relatively small (as is the case when the analyst has good prior knowledge about possible ways in which $X$ may be related to $Y$), since less of the label budget is needed to eliminate incorrect hypotheses. It is typical for disagreement-based active learning algorithms to exhibit this dependence on the hypothesis class $\mathcal{C}$ \citep{puchkin_exponential_2022, hanneke_minimax_2014}.

With different choices of $\mathcal{S}$, we can obtain the following corollaries that allow for different assumptions and problem setups regarding the joint distribution of $(X,Y)$ and the hypothesis class $\mathcal{C}$. All proofs may be found in the Appendix. 

\begin{corollary}[Classical Binary case with low noise]
\label{cor:binary-classical}
Suppose $Y$ is binary, and the joint distribution of $(X,Y)$ and hypothesis class $\mathcal{C}$ are such that there exists \(\mu<\infty\) such that for all \(\varepsilon>0\), \(\mathrm{diam}(\varepsilon;\mathcal{C})\le \mu \varepsilon\), where $\mathrm{diam}(\varepsilon;\mathcal{C})$ is the diameter of the $\varepsilon$-minimal set of $\mathcal{C}$ (this is the "hard margin" condition. For further details, see Section 2 and Theorem 4 of \cite{hanneke_rates_2011}). 
If we set $\mathcal{S}$ to be the $A^2$ algorithm of \cite{balcan_agnostic_2006}, then, given a label budget of $N$ and $\delta \in (0,1/2)$, we have with probability at least $1-\delta$ that $\left| \widehat{\mu}_{\text{PB}} - \mathbb{E}[Y] \right|^2
= \tilde{\mathcal{O}}\left( \frac{\nu + \exp(c \cdot (-N/\log N))}{N} \right)$.
\end{corollary}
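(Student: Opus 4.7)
The corollary reduces to verifying Assumption~\ref{assumption:sub} for the particular choice of $\mathcal{S} = A^2$ under the stated hard-margin condition; once Assumption~\ref{assumption:sub} is in hand, the conclusion follows by a direct invocation of Theorem~\ref{thm:disagbridge} applied to $\widehat{\mu}_{\text{PB}}$ with the failure probability $\delta$ carried through from Stage~1 to Stage~2 via a union bound (the WarmStart-UCB bound in Theorem~\ref{thm:varucb.adj} already absorbs its own $\delta$-dependence into the $\tilde{\mathcal{O}}$). So the real work is verifying the $A^2$ classifier enjoys the required near-exponential excess-risk bound under the hard-margin hypothesis.

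My plan is to invoke the classical rate for $A^2$ due to \cite{balcan_agnostic_2006}, refined and generalized in Theorem~4 of \cite{hanneke_rates_2011}. Under the diameter condition $\mathrm{diam}(\varepsilon;\mathcal{C}) \leq \mu\varepsilon$ (equivalently, bounded disagreement coefficient combined with a Tsybakov-type margin of exponent one), the $A^2$ algorithm outputs a classifier $\widehat{h}$ whose excess $0$--$1$ risk satisfies, with probability at least $1-\delta$,
\[
\Pr(\widehat{h}(X)\neq Y) - \nu_{01} \;\lesssim\; \exp\!\left(-c\cdot \tfrac{N}{\log N}\right),
\]
where $\nu_{01}$ is the Bayes risk under $0$--$1$ loss and $c>0$ depends on the VC dimension of $\mathcal{C}$, the disagreement coefficient, and $\mu$. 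The logarithmic factors in $\log(1/\delta)$ and $N$ are absorbed into $\tilde{\mathcal{O}}$ in the usual way.

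To cash this into Assumption~\ref{assumption:sub} I use the fact that in the binary case with $\widehat{h},Y\in\{0,1\}$, squared loss coincides with $0$--$1$ loss pointwise: $(\widehat{h}(X)-Y)^2 = \mathbf{1}\{\widehat{h}(X)\neq Y\}$. Hence $\mathrm{er}(\widehat{h}) = \Pr(\widehat{h}(X)\neq Y)$ and, since the infimum over $\mathcal{C}$ coincides under the two losses, $\nu = \nu_{01}$. Therefore the exponential-rate bound on excess $0$--$1$ risk transfers verbatim to excess squared-loss risk, and Assumption~\ref{assumption:sub} holds with the constant $c$ inherited from the $A^2$ analysis. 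Applying Theorem~\ref{thm:disagbridge} with Stage~1 budget $\lfloor N/2 \rfloor$ and a union bound over the two stages (each run at confidence $\delta/2$) gives the stated bound, absorbing the constant~$2$ into the $\tilde{\mathcal{O}}$.

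The step I expect to require the most care is checking that the precise form of the $A^2$ rate cited in \cite{balcan_agnostic_2006, hanneke_rates_2011} matches the $\exp(-cN/\log N)$ shape required by Assumption~\ref{assumption:sub}, as opposed to a $\exp(-cN)$ shape: the $\log N$ in the denominator comes from the standard label-complexity factor $\theta\cdot\mathrm{VC}(\mathcal{C})\log(1/\varepsilon)$ in the disagreement-based analysis, and one must invert this label complexity into a risk bound. Once this inversion is done carefully, the rest of the argument is mechanical.
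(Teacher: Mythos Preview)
Your proposal is correct and takes essentially the same approach as the paper: verify Assumption~\ref{assumption:sub} by invoking the exponential-rate guarantee for $A^2$ from \cite{hanneke_rates_2011} (the paper cites Theorem~5 rather than Theorem~4, but the substance is identical), then apply Theorem~\ref{thm:disagbridge}. Your added details---the pointwise coincidence of squared and $0$--$1$ loss in the binary case, and the union bound over stages---are exactly the kind of gap-filling the paper's two-sentence proof leaves implicit.
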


\begin{corollary}[Binary, weaker structural conditions on $\mathcal{C}$]
\label{cor:binary-puchkin}
Assume $Y\in\{0,1\}$ and that the joint distribution of $(X,Y)$ and hypothesis class $\mathcal C$ are such that Massart’s noise condition is satisfied (Assumption 4 in \cite{puchkin_exponential_2022}), without requiring that $\mathcal{C}$ contain the Bayes optimal classifier. Suppose further that the joint distribution and hypothesis class are such that the star number $s$ and the (combinatorial) diameter of $\mathcal{C}$ are finite (see Section 2 and Theorem 4.1 of \cite{puchkin_exponential_2022}). If we set $\mathcal{S}$ to be Algorithm 4.2 of \cite{puchkin_exponential_2022}, then,  given a label budget of $N$ and $\delta \in (0,1/2)$, we have with probability at least $1-\delta$ that $\left| \widehat{\mu}_{\text{PB}} - \mathbb{E}[Y] \right|^2
= \tilde{\mathcal{O}}\left( \frac{\nu + \exp(c \cdot (-N/\log N))}{N} \right)$.
\end{corollary}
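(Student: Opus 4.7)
The plan is to reduce Corollary \ref{cor:binary-puchkin} to Theorem \ref{thm:disagbridge} by verifying that, under the stated hypotheses, the chosen active learner $\mathcal{S}$ (Algorithm 4.2 of \cite{puchkin_exponential_2022}) satisfies Assumption \ref{assumption:sub}. Once Assumption \ref{assumption:sub} is verified, Theorem \ref{thm:disagbridge} delivers precisely the desired high-probability bound, so the corollary follows immediately.

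First, I would invoke Theorem 4.1 of \cite{puchkin_exponential_2022}, which states that under Massart's noise condition (their Assumption 4), finite star number $s$, and finite combinatorial diameter of $\mathcal{C}$, Algorithm 4.2 returns a classifier $\widehat{h}$ whose \emph{excess classification risk} (i.e., excess $0$--$1$ loss) decays at rate $\exp(c \cdot (-N/\log N))$ with probability at least $1-\delta$, where $c > 0$ depends on $s$, the diameter, and logarithmic factors in $\delta^{-1}$. Notably, this theorem does \emph{not} require that the Bayes optimal classifier lie in $\mathcal{C}$, which matches the hypothesis of the corollary.

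Next, I would translate the excess $0$--$1$ loss bound into an excess squared-loss bound, which is the form needed by Assumption \ref{assumption:sub}. In the binary case ($Y \in \{0,1\}$ and $h: \mathcal{X} \to \{0,1\}$), squared loss coincides exactly with $0$--$1$ loss pointwise, so $\mathrm{er}(h) = \mathbb{E}[(h(X)-Y)^2] = \Pr(h(X) \neq Y)$ for every $h \in \mathcal{C}$. Consequently, $\nu = \inf_{h \in \mathcal{C}} \mathrm{er}(h)$ equals the infimum $0$--$1$ risk over $\mathcal{C}$, and the excess squared-loss bound from \cite{puchkin_exponential_2022} transfers verbatim:
\[
\mathbb{E}[(\widehat{h}(X)-Y)^2] - \nu \;\lesssim\; \exp\!\left(c \cdot \tfrac{-N}{\log N}\right),
\]
with probability at least $1-\delta$. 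This verifies Assumption \ref{assumption:sub} for the data-generating process, hypothesis class, and $\mathcal{S}$ specified by the corollary.

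Finally, applying Theorem \ref{thm:disagbridge} with this choice of $\mathcal{S}$ yields the claimed bound $|\widehat{\mu}_{\text{PB}} - \mathbb{E}[Y]|^2 = \tilde{\mathcal{O}}\big( (\nu + \exp(c \cdot (-N/\log N)))/N \big)$ with probability at least $1-\delta$. The main obstacle, and essentially the only non-bookkeeping step, is the careful translation between the form of the excess-risk bound stated in \cite{puchkin_exponential_2022} (which is typically written in terms of $0$--$1$ loss) and the squared-loss formulation required by Assumption \ref{assumption:sub}; the binary-outcome identification above resolves this cleanly, but care must be taken to ensure the confidence level $\delta$ is handled consistently across the two stages of PartiBandits by applying a union bound, so that the overall high-probability guarantee holds at level $1-\delta$.
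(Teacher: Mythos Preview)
Your proposal is correct and follows essentially the same route as the paper's own proof: invoke Theorem~4.1 of \cite{puchkin_exponential_2022} to verify Assumption~\ref{assumption:sub}, then apply Theorem~\ref{thm:disagbridge}. The paper's proof is terser and does not spell out the $0$--$1$/squared-loss identification or the confidence-level bookkeeping, but these are exactly the details you supply and they are handled correctly.
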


\begin{corollary}[Multiclass]
\label{cor:multiclass}
Suppose $Y$ is $k$-class ($k>2$) and that the joint distribution of \((X,Y)\) and hypothesis class \(\mathcal C\) satisfy Assumptions 1–3 and the multiclass Tsybakov noise condition (Assumption 4) of \cite{agarwal_selective_2013}. If we set $\mathcal{S}$ to be Algorithm 1 of \cite{agarwal_selective_2013}, then, given a label budget of $N$ and $\delta \in (0,1/e)$, we have with probability at least $1-\delta$ that $\left| \widehat{\mu}_{\text{PB}} - \mathbb{E}[Y] \right|^2
= \tilde{\mathcal{O}}\left( \frac{\nu + \exp(c \cdot (-N/\log N))}{N} \right)$.
\end{corollary}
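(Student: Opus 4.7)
The plan is to observe that this corollary is a direct specialization of Theorem~\ref{thm:disagbridge}: once Assumption~\ref{assumption:sub} is verified for the specified $(X,Y)$, hypothesis class $\mathcal{C}$, and algorithm $\mathcal{S} = $ Algorithm~1 of \cite{agarwal_selective_2013}, the main theorem delivers the stated conclusion with no further work. All the real content, therefore, is translating the multiclass classification guarantee of \cite{agarwal_selective_2013} into a squared-loss excess risk bound of the form demanded by Assumption~\ref{assumption:sub}.

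First, I would invoke the main convergence guarantee for Algorithm~1 of \cite{agarwal_selective_2013}: under their Assumptions~1--3 and the multiclass Tsybakov noise condition (their Assumption~4), running the algorithm with a budget of $\lfloor N/2 \rfloor$ labels and confidence level $\delta \in (0, 1/e)$ returns a classifier $\widehat{h}$ satisfying, with probability at least $1-\delta$, $\mathrm{er}_{0/1}(\widehat{h}) - \inf_{h \in \mathcal{C}} \mathrm{er}_{0/1}(h) \lesssim \exp(-c N / \log N)$, where $c > 0$ depends on the star number, the combinatorial complexity of $\mathcal{C}$, and the Tsybakov exponent, and where polylogarithmic factors are absorbed into $\tilde{\mathcal{O}}$. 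The required range $\delta \in (0, 1/e)$ matches exactly the range imposed in their theorem statement.

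Second, I would translate this $0/1$-risk bound into a squared-loss bound, which is the form Assumption~\ref{assumption:sub} is stated in. Since $Y$ takes values in a finite set of size $k$ contained in an interval of length at most $k-1$, the sandwich inequality $\mathbf{1}\{h(X) \neq Y\} \leq (h(X) - Y)^2 \leq (k-1)^2 \mathbf{1}\{h(X) \neq Y\}$ holds for every $h$. This yields both $\mathrm{er}_{sq}(\widehat{h}) \leq (k-1)^2 \mathrm{er}_{0/1}(\widehat{h})$ and $\inf_{h \in \mathcal{C}} \mathrm{er}_{0/1}(h) \leq \inf_{h \in \mathcal{C}} \mathrm{er}_{sq}(h) = \nu$, so combining with the bound from \cite{agarwal_selective_2013} gives $\mathrm{er}_{sq}(\widehat{h}) \leq (k-1)^2 \bigl( \nu + \exp(-c N/\log N) \bigr)$, which verifies Assumption~\ref{assumption:sub} after absorbing the constant $(k-1)^2$ into the $\tilde{\mathcal{O}}$. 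Applying Theorem~\ref{thm:disagbridge} completes the argument.

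The main obstacle is exactly this loss translation. The $\mathcal{C}$-minimizer of $0/1$ loss need not coincide with the $\mathcal{C}$-minimizer of squared loss, so the cleanest comparison between the two excess risks is multiplicative rather than additive on $\nu$. Because $k$ is treated as a constant and the target bound is only specified up to $\tilde{\mathcal{O}}$, this multiplicative slack is harmless, but in a regime where one tracks the $k$-dependence explicitly it would have to be stated carefully. A secondary technicality is matching the failure probability and any log-factor overhead in \cite{agarwal_selective_2013} to those hidden inside the $\tilde{\mathcal{O}}$ in Assumption~\ref{assumption:sub}; both are routine once the main translation is in place.
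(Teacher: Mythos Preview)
Your proposal is correct and follows the same route as the paper: invoke the exponential-savings guarantee of Algorithm~1 in \cite{agarwal_selective_2013} to verify Assumption~\ref{assumption:sub}, then apply Theorem~\ref{thm:disagbridge}. The paper's own proof is a one-liner citing Corollary~1 of \cite{agarwal_selective_2013} and leaves the $0/1$-to-squared-loss translation implicit; your sandwich-inequality step makes that translation explicit and correctly notes that the resulting multiplicative $(k-1)^2$ on $\nu$ is harmless for the final $\tilde{\mathcal{O}}$ bound (even if it does not literally match the additive form of Assumption~\ref{assumption:sub}, it feeds into the proof of Theorem~\ref{thm:disagbridge} in exactly the same way).
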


Corollary \ref{cor:multiclass} effectively allows PartiBandits to also handle real-valued outcomes (ex: $Y \sim \text{Unif}[0,1]$) if the analyst first discretizes $Y$ into bins, effectively turning the problem setup into that of Corollary \ref{cor:multiclass}. 

Up to this point, we have focused on active learning algorithms $\mathcal{S}$ that guarantee exponential savings by grouping together instances that are likely to have similar labels. However, we may also consider $\mathcal{S}$ that not only identify homogeneous regions but also heterogeneous regions. Such $\mathcal{S}$ would be helpful for identifying strata for a distribution where
labels are assigned by a simple threshold rule that outputs 0 if $x \leq 1/2$ and 1 otherwise, except that in the regions $x \in (1/4,1/2]$ and $x \in (1/2,3/4]$ the label is flipped with probability $0.1$. The optimal stratification scheme here splits the domain into four intervals $[0,1/4], (1/4,1/2], (1/2,3/4], (3/4,1]$, and allocate more of the Stage-2 samples to the middle two strata where the labels are noisier. In Corollary \ref{cor:ambig} below, we introduce an example of an $\mathcal{S}$ that helps identify such a scheme. The proof is in the Appendix.

\begin{corollary}[Heterogeneity-Aware $\mathcal{S}$]
\label{cor:ambig}
    Assume the setup of Corollary \ref{cor:binary-puchkin}.
    Define $\mathcal{S}$ in the following way:

    \begin{algorithm}[H]
    \caption{Heterogeneity-Aware Active Learning Algorithm}
    \label{alg:three}
    \begin{algorithmic}[1]
    \Require hypothesis class \( \mathcal{C} \), label budget \( N \), confidence level \( \delta \).
    \State Run Algorithm 4.2 of \cite{puchkin_exponential_2022} with a given label budget $N'$ and hypothesis class $\mathcal{C}$ to obtain a classifier, $\widehat{h}$ 

    \State Let $\mathcal{X}_{*}$ denote the abstention region obtained in Step 2 of Algorithm 4.2 of \cite{puchkin_exponential_2022}.


    \State Define $\widehat{h}^* (x) = \widehat{h}(x) + \varepsilon(N)$ if $x \in \mathcal{X}_{*}$, and $\widehat{h}^*(x) = \widehat{h}(x)$ otherwise. $\varepsilon(N)$ is an arbitrarily small number relative to $N$ (we may choose $\varepsilon(N) = \exp(-N/\log N)$).

    \Statex \textbf{Output:} $\widehat{h}^*(x)$
    
    \end{algorithmic}
    \end{algorithm}

    Then we have that given a label budget of $N$ and $\delta \in (0,1/2)$, we have with probability at least $1-\delta$, $\left| \widehat{\mu}_{\text{PB}} - \mathbb{E}[Y] \right|^2
    = \tilde{\mathcal{O}}\left( \frac{\nu + \exp(c \cdot (-N/\log N))}{N} \right)$.
\end{corollary}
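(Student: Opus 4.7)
My plan is to reduce the statement to a direct invocation of Theorem~\ref{thm:disagbridge} by showing that the heterogeneity-aware algorithm $\mathcal{S}$ defined in Algorithm~\ref{alg:three} still satisfies Assumption~\ref{assumption:sub}. The setup of Corollary~\ref{cor:binary-puchkin} already guarantees, by the result of \cite{puchkin_exponential_2022}, that Puchkin's Algorithm 4.2 applied to $\mathcal{C}$ with label budget $N'$ produces a classifier $\widehat{h}$ satisfying $\mathrm{er}(\widehat{h}) - \nu \lesssim \exp(c \cdot (-N/\log N))$ with high probability. So all that remains is to verify that the additive perturbation on the abstention region $\mathcal{X}_{*}$ does not destroy this exponential decay, and that the resulting stratification is still admissible as an input to the Stage-2 subroutine inside PartiBandits.

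For the first point, I would expand the squared loss of $\widehat{h}^{*}$ directly: on $\{X \notin \mathcal{X}_{*}\}$ the two classifiers agree, while on $\{X \in \mathcal{X}_{*}\}$ we have
\[
(\widehat{h}^{*}(X)-Y)^{2} - (\widehat{h}(X)-Y)^{2}
= 2\,\varepsilon(N)\,(\widehat{h}(X)-Y) + \varepsilon(N)^{2}.
\]
Taking expectations and using $|\widehat{h}(X)-Y|\le 1$ together with $\Pr(X\in\mathcal{X}_{*})\le 1$ gives
\[
\mathrm{er}(\widehat{h}^{*}) - \mathrm{er}(\widehat{h}) \le 2\,\varepsilon(N) + \varepsilon(N)^{2} = O\!\left(\exp(-N/\log N)\right).
\]
Combining this with Puchkin's bound yields $\mathrm{er}(\widehat{h}^{*}) - \nu \lesssim \exp(c'\cdot(-N/\log N))$ for a possibly smaller constant $c'>0$, which is exactly Assumption~\ref{assumption:sub} with $\widehat{h}^{*}$ in place of $\widehat{h}$. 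Note also that $\widehat{h}^{*}$ takes values in the finite set $\{0,1,\varepsilon(N),1+\varepsilon(N)\}$, so the preimage stratification $\mathcal{G}^{*}=\{(\widehat{h}^{*})^{-1}(j)\}_{j\in\mathrm{Im}(\widehat{h}^{*})}$ formed in Stage~1 of PartiBandits is a valid finite partition and the $|\mathcal{G}|$ factor buried in Theorem~\ref{thm:disagbridge} is still an absolute constant.

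Having established Assumption~\ref{assumption:sub}, I would then invoke Theorem~\ref{thm:disagbridge} for the algorithm $\mathcal{S}$ of Algorithm~\ref{alg:three} to conclude that
\[
\left|\widehat{\mu}_{\text{PB}} - \mathbb{E}[Y]\right|^{2}
= \tilde{\mathcal{O}}\!\left(\frac{\nu + \exp(c\cdot(-N/\log N))}{N}\right)
\]
with probability at least $1-\delta$, absorbing the perturbation constant into $c$. Conceptually, the refinement by $\varepsilon(N)$ separates $\mathcal{X}_{*}$ (where the labels are mixed) from the high-confidence region, so the Stage-2 WarmStart-UCB subroutine can concentrate its label budget on the genuinely heterogeneous stratum; but the present corollary only needs the fact that this refinement does not \emph{hurt} the excess-risk bound, which is what the perturbation calculation above delivers.

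The main obstacle I anticipate is nothing more delicate than verifying that the perturbation $\varepsilon(N) = \exp(-N/\log N)$ is chosen small enough relative to Puchkin's guarantee so that the sum of the two exponentially small terms remains exponentially small, which is automatic with the choice prescribed in Algorithm~\ref{alg:three}. A secondary bookkeeping point, which I would handle but not belabor, is that the high-probability event on which Puchkin's bound holds is the same event on which we apply Theorem~\ref{thm:disagbridge}, so the confidence level $\delta$ propagates without loss through the two-stage composition.
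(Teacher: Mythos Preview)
Your proposal is correct and follows essentially the same approach as the paper: both reduce to verifying Assumption~\ref{assumption:sub} for $\widehat{h}^{*}$ by bounding $|\mathrm{er}(\widehat{h}^{*})-\mathrm{er}(\widehat{h})|$ by a multiple of $\varepsilon(N)$ and then invoking Theorem~\ref{thm:disagbridge}. Your direct expansion $(\widehat{h}+\varepsilon-Y)^{2}-(\widehat{h}-Y)^{2}=2\varepsilon(\widehat{h}-Y)+\varepsilon^{2}$ is in fact slightly cleaner than the paper's route via the difference-of-squares factorization, but the argument is the same in substance.
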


What this $\mathcal{S}$ does is capture heterogeneity via the abstention region $\mathcal{X}_*$ produced within Algorithm 4.2 of \cite{puchkin_exponential_2022}: the region where  labels are more likely to be ambiguous. It converts the final classifier $\widehat h$ returned by Algorithm 4.2 of \cite{puchkin_exponential_2022} into a new classifier $\widehat h^*$ that takes values in $\{0,\varepsilon,1,1+\varepsilon\}$, splitting the space into four strata that isolate both homogeneous and heterogeneous regions. The result is intuitive because the difference between $\widehat{h}$ from Corollary \ref{cor:binary-puchkin} and $\widehat{h}^*$ is small, so Assumption \ref{assumption:sub} is satisfied for $\widehat{h^*}$ if it is satisfied for that $\widehat{h}$.

The following yields a matching lower bound. 

\begin{theorem}[Lower Bound for PartiBandits]
\label{thm:lb2}
    Consider the data-generating process where $X \sim \text{Unif}[0,1]$ and $Y = \mathbf{1}\{X \geq t\}$ for some $ t\in [0,1]$, with a $\rho_{\leq}$- and  $\rho_{>}$- fraction of labels $Y$ flipped at random over $X \leq t$ and $X > t$, respectively, and $\rho_{\leq}, \rho_{>} \leq 1/4$. Let $\mathcal{C} = \left\{\textbf{1}\left\{(\cdot) \geq t  \right\} : t \in [0,1] \right\}$. Then we have that for sufficiently large $N$,
    \[
    \left| \widehat{\mu} - \mathbb{E}[Y] \right|^2 \geq c_2  \frac{ \nu + \exp(c \cdot (-N/\log(N))) }{N} 
    \]
    for constants $c, c_2 > 0$ and all estimators $\widehat{\mu}$ of $\mathbb{E}[Y]$ in this setup.

\end{theorem}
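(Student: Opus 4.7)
The plan is to decompose the right-hand side additively as $c_2\nu/N + c_2\exp(-cN/\log N)/N$ and lower-bound each summand separately; these two pieces mirror PartiBandits' two stages (mean estimation on a fixed stratification versus the cost of learning the stratification). Combining the bounds via $a + b \leq 2\max(a,b)$ and rebalancing constants then yields the additive form.

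For the $\nu/N$ piece, I would reduce directly to Theorem \ref{thm:lb1}. Any estimator without oracle knowledge of the true boundary $t$ can only perform worse than one equipped with the stratification $\mathcal{G}^* = \{[0,t),[t,1]\}$, so the lower bound there transfers and gives $|\widehat{\mu} - \mathbb{E}[Y]|^2 \geq c_1 \Sigma_1(\mathcal{G}^*)/N$. A direct calculation gives $\Sigma_1(\mathcal{G}^*) = t\rho_{\leq}(1-\rho_{\leq}) + (1-t)\rho_>(1-\rho_>)$, and since $\rho(1-\rho) \geq (3/4)\rho$ whenever $\rho \leq 1/4$, together with $\nu = t\rho_{\leq} + (1-t)\rho_>$ for the threshold class in this setup, I obtain $\Sigma_1(\mathcal{G}^*) \geq (3/4)\nu$, yielding the $\Omega(\nu/N)$ contribution.

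For the $\exp(-cN/\log N)/N$ piece, I would run a Le Cam two-point reduction against a classical active learning lower bound. Take $P_1, P_2$ in the family with common noise rates $\rho_{\leq}, \rho_>$ but thresholds $t_1 = t$ and $t_2 = t + \Delta$; their means differ by $\Delta(1-\rho_{\leq}-\rho_>) \gtrsim \Delta$, so any estimator whose squared error is below $\Omega(\Delta^2)$ under both distributions induces a test that distinguishes them. Choosing $\Delta$ at the largest scale at which adaptive $N$-query distinguishability still fails, and invoking the sample complexity lower bound for active learning of threshold classes in the agnostic setting (via Hanneke's disagreement-based machinery), gives $\Delta \gtrsim \exp(-c'N/\log N)$. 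Le Cam's two-point inequality then yields $\mathbb{E}[(\widehat{\mu} - \mathbb{E}[Y])^2] \geq \Omega(\exp(-2c'N/\log N))$; absorbing the extra $1/N$ factor into the exponential by choosing the constant $c$ in the theorem large enough (permissible since $\log N$ is negligible compared with $c'N/\log N$) delivers the desired piece. The main obstacle is this second step: converting the existing excess-risk lower bound for agnostic active learning of threshold classes into a total-variation bound on the joint distribution of $N$ adaptive query/response pairs under $P_1$ versus $P_2$, with the logarithmic overhead characteristic of agnostic active learning under bounded noise tracked carefully to match the $\exp(-cN/\log N)$ scale in the upper bound of Theorem \ref{thm:disagbridge} up to constants.
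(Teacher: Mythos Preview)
Your proposal is correct in outline but takes a genuinely different route from the paper's proof.

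You split the target into two pieces, $\nu/N$ and $\exp(-cN/\log N)/N$, and handle them separately: the first via Theorem~\ref{thm:lb1} together with the explicit computation $\Sigma_1(\mathcal{G}^*)\ge\tfrac{3}{4}\nu$, the second via a Le~Cam two-point argument on thresholds $t$ and $t+\Delta$, with $\Delta$ set at the indistinguishability scale coming from active-learning lower bounds for thresholds. The paper instead does a single reduction. It restricts attention to estimators that aggregate means over a learned two-stratum partition $\{S_0,S_1\}$, identifies this partition with a classifier $h$, and proves the reverse inequality $\mathrm{er}(h)\le 2\,\Sigma_1(\mathcal{G})$: since $Y$ is Bernoulli and one can take $\mu_0\le 1/2\le\mu_1$, the bias term $(j-\mu_j)^2$ in the decomposition~\eqref{eqn:riskdecomp0} is dominated by the variance term $\mu_j(1-\mu_j)$, so the full squared risk is at most twice the within-stratum variance. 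A lower bound on $\mathrm{er}(h)$ therefore transfers directly to $\Sigma_1(\mathcal{G})$, and the paper simply invokes the existing minimax lower bound for threshold classification (Hanneke, Castro--Nowak, Burnashev--Zigangirov) as a black box to obtain both the $\nu$ term and the $\exp(-cN/\log N)$ term in one shot.

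What each buys: the paper's route is shorter and sidesteps exactly the obstacle you flag---it never needs to unpack the active-learning lower bound into a total-variation statement, because the inequality $\mathrm{er}(h)\le 2\Sigma_1(\mathcal{G})$ lets it import the classification rate wholesale. On the other hand, the paper's argument is explicitly restricted to stratification-based estimators (it argues heuristically that these are ``best case''), whereas your Le~Cam approach, once the TV step is filled in, would apply to arbitrary estimators and is closer to a standard minimax proof. Your first-piece calculation $\Sigma_1(\mathcal{G}^*)\ge\tfrac{3}{4}\nu$ is also more explicit than what the paper does for that term.
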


Since this lower bound matches the upper bound in Theorem \ref{thm:disagbridge}, we have that the rate of Theorem \ref{thm:disagbridge} is minimax optimal for this classical setting. This lower bound is based on a simple threshold setup with segmented label noise. As we will show in Section \ref{section:sims}, this setup reflects a common situation where a subset of the unlabeled data is highly predictive of $Y$. The proof of Theorem \ref{thm:lb2} starts by noting that the minimimum of $\Sigma_1(\mathcal{G})$ among all possible stratification schemes, $\mathcal{G}$, is precisely $\nu$ because of the way $Y$ is generated. We can then use the lower bounds in \cite{hanneke_minimax_2014} and \cite{hanneke_rates_2011} to show that the exponential decay of the excess risk is the optimal rate in this setup, so no algorithm can cause $\Sigma_1(\mathcal{G})$ to converge to its optimal value faster than this exponential rate.

\section{Empirical Illustration}
\label{section:sims}

We empirically evaluate the performance of our main algorithm, PartiBandits, and comparing it to SRS. We also test the WarmStart-UCB subroutine on the analogous mean estimation task when $X$ can be stratified according to some stratification scheme \textit{a priori}. In most settings, SRS and stratified random sampling (StRS) are the standard approaches\textemdash and often the only realistic choices available\textemdash since the effectiveness of more sophisticated methods is highly domain- and application-specific. In practice, whether alternative sampling algorithms outperform SRS or StRS depends crucially on the relationship between observed covariates and the outcome of interest, which may not always be known or exploitable. That said, we present comparisons to other baselines, as well as analyses with other data generating processes, in the Appendix. We use Monte Carlo simulations and simulations involving nationwide electronic health records, with further details in the Appendix. 

\subsection{Simulations for Theorems \ref{thm:varucb.adj} and \ref{thm:disagbridge}}

The error of the PartiBandits mean estimate is $\tilde{\mathcal{O}}\left( \frac{\nu + \exp(c \cdot (-N/\log(N))) }{N} \right)$, as shown in Theorem \ref{thm:disagbridge}. Hence the critical parameter that affects this rate is $\nu$, which is closely related to $X$'s relationship with $Y$. The left panel of Figure \ref{fig:splash} shows how PartiBandits performs as the strength of the relationship between $X$ and $Y$ varies. We see that generally, PartiBandits eventually outperforms SRS with relatively fewer samples, and this performance gap increases as the relationship between $X$ and $Y$ strengthens. We set $\mathcal{S} = A^2$ for our runs of PartiBandits. The right panel shows the performance of the WarmStart-UCB subroutine, which estimates the mean of \(Y\) when $X$ can be stratified in advance according to some stratification scheme \(\mathcal{G}\). 
We see that WarmStart-UCB consistently outperforms SRS when the stratification scheme closely aligns with the underlying decision boundary, effectively grouping observations with similar values of $Y$. We do not compare PartiBandits to $A^2$. $A^2$ is an active learning algorithm for classification, not mean estimation, so comparing its output directly to mean estimates from PartiBandits or SRS might not be meaningful, and attempting to do so can yield biased results \cite{dong_addressing_2025}. 

\subsection{Simulations for Theorem \ref{thm:disagbridge} Using Health Records Data}

To illustrate the gains of PartiBandits in a real-world setting, we leverage access to the American Family Cohort (AFC) dataset, which contains patient-level data from over 1,000 practices participating in the American Board of Family Medicines PRIME Registry. AFC contains fine-grained longitudinal records of patient race and clinical diagnoses.

Our outcome of interest is a binary random variable, \(Y = H  B\), where \(H\) indicates the presence of hypertension and \(B\) is an indicator for whether the patient is Black. The estimand is \(\mathbb{E}[Y]\), which corresponds to the fraction of patients who (1) are Black and (2) have hypertension. \(X\) is the probability that an individual is Black based on their zip code, $Z$, which is available in the data. This setup reflects a common scenario in which researchers are studying demographic prevalences but lack access to direct demographic labels \citep{andrus_what_2021, us_eo_2021_advancing_2021}, which motivates the use of proxy information like geolocation to guide sampling decisions. 
Though $X$ is defined as a probability mapping, it can also be viewed as an upper bound on the probability that $Y= 1$ for an individual from a certain zip code since $\Pr(HB= 1 \mid Z) \leq \left( \Pr(H = 1 | Z) \wedge \Pr(B = 1 \mid Z)  \right)$, and therefore also a bound on the variance of $Y$. If $H$ is known but $B$ is not (analogous to the problem setting of \cite{elzayn_measuring_2025}), we can obtain even more sampling efficiency gains by forcibly setting $X = 0$ for all $H = 0$, since the variance of $Y$ is 0 for all such $H$. As a result, PartiBandits will not request labels $B$ from individuals such that $H = 0$.
We compare the PartiBandits and SRS estimates of $\mathbb{E}[Y]$ under different label budgets. Because obtaining diagnosis and race data is often costly, this setting is such that labels are expensive, making it well suited to illustrate the benefits of using PartiBandits.

We focus on individuals whose derived probabilities of being Black, $X$, fall in the top and bottom 5th percentiles of the distribution. 
Although we restrict to these tails, this experimental setup does not simplify the problem. We are interested in the interaction variable $HB$ rather than a single class label, which makes identifying any separation more difficult. Even when \( B = 1 \), not all such individuals have hypertension, so \( HB \) is not always $1$ and can even be $0$ more often than when $B = 0$. Since, AFC data are not IID draws from the general population, the upper tail may, for example, include individuals from predominantly Black but affluent neighborhoods who are less likely to have hypertension. Thus, $HB$ may even be $0$ more often than for those classified as $B=0$, so the class separation is not immediate. 
This setup reflects the reality of many datasets that are not IID samples from the general population. 

Our focus on the tails also shows that $X$ does not need to be highly predictive of $Y$ across the entire population. We show that the mean within these tails where $X$ is highly predictive can be estimated accurately with much fewer labels than SRS requires, freeing up the remainder of the label budget for regions where $X$ is less informative of $Y$. This experimental setup thus illustrates how PartiBandits still offers a way of efficiently estimating the population mean. 

To run this simulation, we draw, for each label budget, 500 random subsets of 10,000 patients each from the full AFC dataset of 6 million patients. Within each subset, we restrict attention to individuals whose geocoding-derived probabilities of being Black, $X$, fall in the top or bottom 5th percentile, and estimate the mean of $Y$ for this subpopulation using both PartiBandits and SRS. We compute the 90th percentile of the resulting estimation errors to obtain a high-probability bound for each method. Our choice of $\mathcal{S}$ is the classical $A^2$ algorithm of \cite{balcan_agnostic_2006} (thus putting us in the regime of Corollary \ref{cor:binary-classical}). Figure \ref{fig:afc-results} shows the results and confirms that PartiBandits eventually outperforms SRS with relatively fewer samples. For smaller label budgets, SRS fares better but only by a universal constant factor.

\begin{figure}[h!]
    \centering
    \includegraphics[width=0.9\linewidth]{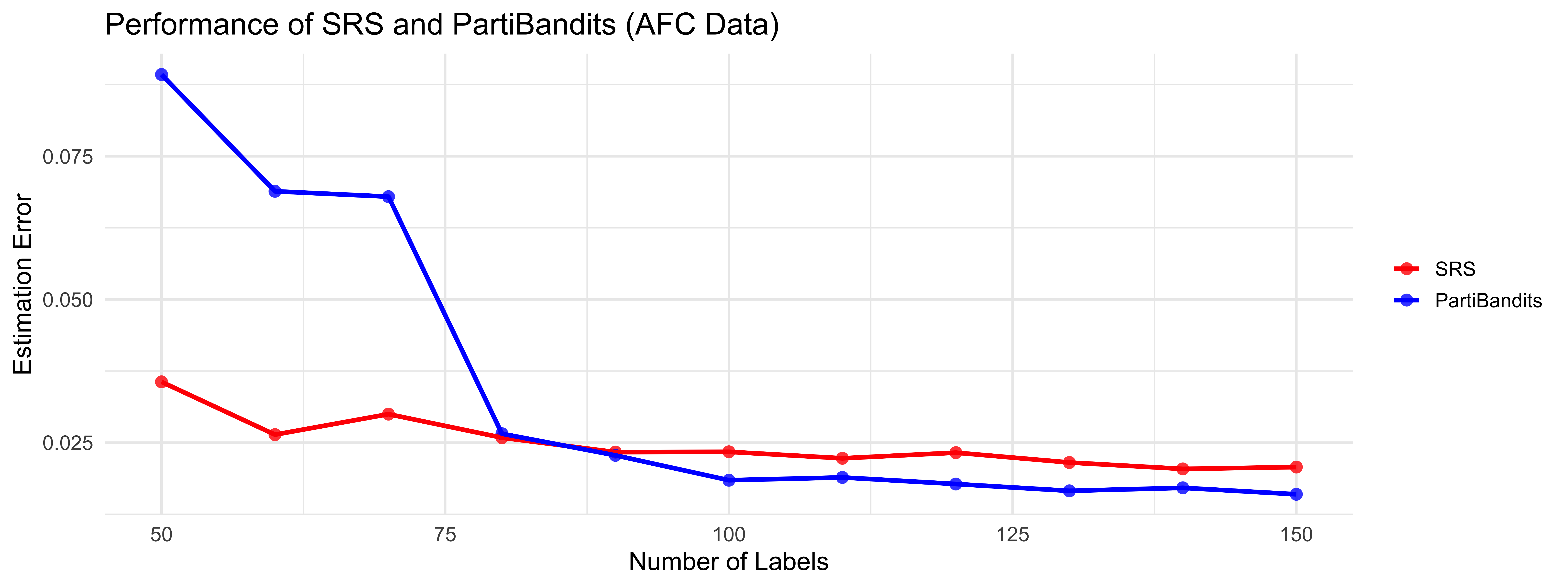}
    \caption{Comparison of estimation error for different label budgets using the AFC data.}
    \label{fig:afc-results}
\end{figure}

\newpage

\newpage

\newpage

\section*{Appendix}

\subsection{Proofs}

Before proving Theorem \ref{thm:varucb.adj}, we need a few auxiliary lemmas. 

\begin{lemma}
\label{lemma:rone}
    Let $\delta \in (0,1)$ and 
    
    \[R_1(n_N) = R_1(n) := \left\| \left( \frac{\sigma_g^2}{n_{g,N}} \right)_{g=1}^{G} \right\|_{\ell_1},\]
    
    where $n_N = n = (n_{1,N}, \ldots, n_{G,N})$. Then we have that with probability at least $1-\delta$, 

    \[
    \left| \widehat{\mu}_{\text{WS-UCB}} - \mu \right|^2 \leq C \left(   R_1(n_N) \log \frac{2}{\delta} \right)
    \]

    for some absolute constant $C > 0$. 
\end{lemma}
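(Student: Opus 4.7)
The plan is to view $\widehat{\mu}_{\text{WS-UCB}} - \mu$ as a centered weighted sum over all $N$ queried labels and apply a sub-Gaussian concentration bound whose variance proxy is exactly $R_1(n_N)$. To set things up, I would write
\[
\widehat{\mu}_{\text{WS-UCB}} - \mu \;=\; \sum_{g=1}^{G} \bigl(\widehat{\mu}_{g,N} - \mu_g\bigr) \;=\; \sum_{g=1}^{G} \frac{1}{n_{g,N}} \sum_{t:\, X_t \in A_g} P_g\bigl(Y_t - \mathbb{E}[Y \mid X \in A_g]\bigr),
\]
and note that in the pool-based setup the labels $\{Y_t : X_t \in A_g\}$ are fresh iid draws from $Y \mid X \in A_g$, independent across $g$ once we condition on the full allocation sequence $(X_1,\ldots,X_N)$.

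Next, conditional on the realized allocation $n_N$, each inner average is a mean of $n_{g,N}$ iid bounded random variables with variance $\sigma_g^2$, and the across-group sum is a sum of conditionally independent zero-mean contributions whose conditional variances add to exactly $R_1(n_N)$. Since $Y$ is $k$-class the variables $P_g Y_t$ are bounded and hence sub-Gaussian with a fixed proxy; applying Hoeffding (or equivalently a sub-Gaussian tail bound) to the full aggregated sum should give
\[
\Pr\!\left(\bigl|\widehat{\mu}_{\text{WS-UCB}} - \mu\bigr| > t \,\bigm|\, n_N\right) \;\leq\; 2\exp\!\left(-\tfrac{c\, t^2}{R_1(n_N)}\right),
\]
after which I would choose $t = \sqrt{C\, R_1(n_N)\log(2/\delta)}$, square, and integrate out the conditioning to obtain the claimed bound.

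The main obstacle is the adaptivity of the counts $n_{g,N}$: although samples are iid given which group they come from, $n_N$ itself is correlated with past sample values through the UCB rule, so a naive Hoeffding bound conditional on $n_N$ is not immediately legitimate. To avoid incurring a $\log N$ or $\log G$ penalty from a union bound over the $\binom{N+G-1}{G-1}$ possible allocations, I would invoke a self-normalized or martingale version of the sub-Gaussian inequality (e.g.\ a stopping-time Hoeffding) applied to the full sum, exploiting the fact that each increment $P_g(Y_t - \mathbb{E}[Y \mid X \in A_{g_t}])$ is a bounded martingale difference with respect to the filtration generated by past queries. The warm-start step of the algorithm plays an auxiliary role here: by guaranteeing $n_{g,N} \geq \tau N / G$ for every $g$, it ensures $R_1(n_N)$ is finite on the event of interest and sidesteps any division-by-zero issue when forming $\widehat{\mu}_{g,N}$.
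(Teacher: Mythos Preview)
Your approach is essentially the paper's: decompose $\widehat{\mu}_{\text{WS-UCB}}-\mu$ as a sum of group-wise deviations, note boundedness (the $k$-class assumption), and apply a Hoeffding-type tail bound whose variance proxy is $\sum_g \sigma_g^2/n_{g,N}=R_1(n_N)$, then invert to get $s^2 = 2R_1(n_N)\log(2/\delta)$. The paper's proof is in fact shorter than your sketch---it applies Hoeffding directly to the aggregated sum and inverts, without any conditioning argument or discussion of adaptivity.

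The one place you go further than the paper is in flagging the adaptivity of $n_N$ and proposing a martingale/self-normalized fix. The paper's own proof does \emph{not} address this: it writes $\Var(\widehat{\mu}_{g,N})=\sigma_g^2/n_{g,N}$ and applies Hoeffding as if the allocation were deterministic. Your instinct to worry here is well founded (and ironically the paper itself later cites results on bias of adaptive sample means), but for the purposes of matching the paper you can drop the self-normalized machinery and simply apply Hoeffding as stated. Your remark about the warm-start ensuring $n_{g,N}\geq \tau N/G$ is also extra---the paper does not use it in this lemma, only later in the proof of Theorem~\ref{thm:varucb.adj}.
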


\begin{proof}
    We begin by recalling that for any policy we have: 

    \[
    \widehat{\mu}_{g,N} = \frac{1}{n_{g,N}} \sum_{t : X_t \in A_g} Y_t \cdot P_g,
    \]

    and 

    \[
    \widehat{\mu}_{\text{WS-UCB}} = \sum_g \hat{\mu}_{g,N}. 
    \]

    Since $\widehat{\mu}_{\text{WS-UCB}}$ is an unbiased estimator for $\mu$ and $\Var(\widehat{\mu}_{g,N}) = \frac{1}{n_{g,N}^2} \sum_{t : X_t \in A_g} \Var(Y_t  P_g) = \frac{\sigma_g^2}{n_{g,N}}$, we have:

    \begin{align*}
            \Pr \left( \left| \widehat{\mu}_{\text{WS-UCB}} - \mu \right| \geq s \right)  & = \Pr \left( \left| \sum_g \left( \hat{\mu}_{g,N} - \mathbb{E}[\hat{\mu}_{g,N} ] \right) \right| \geq s \right) \tag{Linearity}
            \\
            & \leq 2 \exp \left( -\frac{s^2}{2 \sum_{i=1}^{g} \frac{\sigma_g^2}{n_{g,N}}} \right) \tag{Hoeffding}.
    \end{align*}

    Through the classical exercise of setting the left-hand-side to $\delta$ and writing $s$ in terms of $\delta$, we have:  

    \begin{equation}
    \label{hoeff}
            s^2 = 2 \underbrace{\sum_{g=1}^{G} \frac{ \sigma_g^2}{n_{g,N}} }_{R_1(n_N)} \log \frac{2}{\delta}.
    \end{equation}

\end{proof}

The next Lemma is simple but is important for linking the results of \cite{aznag_active_2023} to our work here. 

\begin{lemma}
\label{lemma:ronestar}
    Let 

    \[
    R_1(n^*) := \frac{\left( \sum_{g \in [G]} \sigma_g \right)^2}{N}.
    \]

    Then, 

    \[
    R_1(n^*) \leq G\frac{\Sigma_1(\mathcal{G})}{N},
    \] 

    where $\Sigma_1(\mathcal{G})$ is the expected conditional variance of \( Y \) given the stratification \( \mathcal{G} \):

    \[
    \Sigma_1(\mathcal{G}) = \sum_{g \in [G]} \sigma_g'^2 P_g.
    \]
\end{lemma}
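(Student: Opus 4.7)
The plan is to reduce the inequality to an elementary Cauchy--Schwarz bound combined with the identity $\sigma_g = P_g \sigma_g'$ supplied by the setup. First I would recall from the setup that $\sigma_g^2 = \mathrm{Var}(P_g Y \mid X \in A_g) = P_g^2 \sigma_g'^2$, so $\sigma_g = P_g \sigma_g'$, and hence $\sum_{g\in[G]} \sigma_g = \sum_{g\in[G]} P_g \sigma_g'$. Substituting this into the numerator of $R_1(n^*)$ converts the target inequality into a statement purely in terms of $P_g$ and $\sigma_g'$, which is where the work actually takes place.

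Next I would apply Cauchy--Schwarz to the $G$-dimensional vectors $(\sigma_1, \ldots, \sigma_G)$ and the all-ones vector, obtaining $\bigl(\sum_{g\in[G]} \sigma_g\bigr)^2 \leq G \sum_{g\in[G]} \sigma_g^2 = G \sum_{g\in[G]} P_g^2 \sigma_g'^2$. Since $P_g \in [0,1]$ for each $g$, we have $P_g^2 \leq P_g$, so $G \sum_{g\in[G]} P_g^2 \sigma_g'^2 \leq G \sum_{g\in[G]} P_g \sigma_g'^2 = G \cdot \Sigma_1(\mathcal{G})$. Dividing through by $N$ gives exactly $R_1(n^*) \leq G \cdot \Sigma_1(\mathcal{G})/N$. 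There is no real obstacle here — the lemma is essentially a two-line chain of inequalities whose sole role is to translate the $R_1(n^*)$ notation inherited from \cite{aznag_active_2023} into the average within-group variance $\Sigma_1(\mathcal{G})$ that appears in the statement of Theorem~\ref{thm:varucb.adj}.
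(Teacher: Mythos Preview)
Your proof is correct and is essentially identical to the paper's. The paper invokes ``$\sigma_g^2 = P_g^2 \sigma_g'^2$, $P_g \in (0,1)$, and the norm equivalence property of $\ell^1$ and $\ell^2$ norms on $\mathbb{R}^d$''; your Cauchy--Schwarz step with the all-ones vector is exactly that $\ell^1$/$\ell^2$ norm equivalence, and your use of $P_g^2 \leq P_g$ matches the paper's appeal to $P_g \in (0,1)$.
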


\begin{proof}
    This follows immediately from the fact that $\sigma_g^2 = P_g^2 \cdot \sigma_g'^2$ and $P_g \in (0,1)$ for all $g \in [G]$ and from the norm equivalence property of $\ell^1$ and $\ell^2$ norms on $\mathbb{R}^d$. 
\end{proof}



 \begin{proof}[Proof of Theorem \ref{thm:varucb.adj}]
     We have the following directly from Section A.5 of \cite{aznag_active_2023}:

        \[
        \frac{R_1(n) - R_1(n^*)}{R_1(n^*)} 
        \leq \frac{G \| n - n^* \|_\infty^2}{N \min_g n^*_{g,N}} 
        + \frac{7(3)^2 \Sigma_1^2}{\sigma_{\min}^2} 
        \max_g \left( \frac{n^*_{g,N}}{n_{g,N}} \right)^6 
        \frac{\| n - n^* \|_\infty^3}{N^3},
        \]

    where: 

    \begin{itemize}

        \item $\sigma_{\min} = \min_{g \in [G] }\sigma_g$, 

        \item $\Sigma_1 = \sum_{g \in [G]} \sigma_g$,

        \item $n^*_{g,N} = \frac{\sigma_g}{\Sigma_1} N$,

        \item and  ${n}^* = \left(n^*_{1,N}, \ldots, n^*_{G,N} \right)$.
    \end{itemize}


        This simplifies to:
        
    \begin{equation}
    \label{eqn:upper bound on ucb}
        \frac{R_1(n) - R_1(n^*)}{R_1(n^*)} 
        \leq \underbrace{\frac{G \| n - n^* \|_\infty^2}{N \min_g n^*_{g,N}} }_{(I)}
        + \underbrace{\frac{63 \Sigma_1^2}{\sigma_{\min}^2} 
        \max_g \left( \frac{n^*_{g,N}}{n_{g,N}} \right)^6 
        \frac{\| n - n^* \|_\infty^3}{N^3}}_{(II)}.
    \end{equation}

    By Lemmas \ref{lemma:rone} and \ref{lemma:ronestar}, it is sufficient to show that for sufficiently large $N$, the right hand side of Equation \ref{eqn:upper bound on ucb} is upper bounded by some constant. We first show that term $(I)$ is upper bounded by a constant and then we show that the same goes for term $(II)$. For all $N$ that are not sufficiently large, we will perform a classical Hoeffding analysis. 

    \begin{itemize}
        \item \textbf{Bounding Term (I).}  We have that

    \begin{equation}
    \label{eqn:summation}
        \| n - n^* \|_\infty \leq 3G + \frac{2GC_N}{\Sigma_1 }  \sqrt{ \min_h n^*_{h,N} }
    \end{equation}
    
    by Equation 23 of \cite{aznag_active_2023}, and 
    
    \[\min_h n^*_{h,N} = \frac{\sigma_{\min}}{\Sigma_1}N\]
    
    by the analysis in Lemma 1 of \cite{aznag_active_2023}.  What we want to do first is combine the summation on the right hand side of \ref{eqn:summation} into one term. As long as

    \[ N \geq \underbrace{\left( 2 \Sigma_1 \sqrt{ \frac{\Sigma_1}{\sigma_{\min}}} \right)^2}_{C_1(\Sigma, \sigma_1)} \tag{Condition 1},\]
    
    then: 
    
    \begin{align*}
        \sqrt{\min_h n^*_{h,N}} & = \sqrt{ \frac{\sigma_{\min}}{\Sigma_1} N }
        \\
        & \geq 2 \Sigma_1, \tag{by Condition 1}
    \end{align*}
    
    which implies that: 

    \[\frac{2GC_N}{\Sigma_1 } \sqrt{ \min_h n^*_{h,N} } \geq 4G \geq 3G, \tag{$C_N \geq 1$ by construction}\]

    and so we can write: 
    
    \[ \| n - n^* \|_\infty \leq  \frac{8 GC_N}{\Sigma_1 }  \sqrt{ \min_h n^*_{h,N} }.\]

    Thus, if we assume Condition 1 and

    \[ N \geq \underbrace{\frac{64 G^3 C_N^2}{\Sigma_1^2}}_{C_2(G,\Sigma_1, \sigma_{\min})} \tag{Condition 2}, \]
    
    then we have that 
    
    \begin{align*}
        \underbrace{\frac{G \| n - n^* \|_\infty^2}{N \min_g n^*_{g,N}} }_{(I)} & \leq \frac{G \left( \frac{8 G C_N}{\Sigma_1} \sqrt{ \min_h n^*_{h,N} } \right)^2}{N \cdot \min_g n^*_{g,N}}
        \\
        & = \frac{G \cdot \left( \frac{64 G^2 C_N^2}{\Sigma_1^2} \cdot \min_h n^*_{h,N} \right)}{N \cdot \min_g n^*_{g,N}}
        \\
        & =  \frac{64 G^3 C_N^2}{\Sigma_1^2 N}
        \\
        & \leq 1. 
    \end{align*}

    \item \textbf{Bounding Term (II).}   We rewrite this term as the product of two terms, $ \left( \frac{63 \Sigma_1^2}{\sigma_{\min}^2} \frac{\| n - n^* \|_\infty^3}{N^3} \right) \cdot \left( \max_g \left( \frac{n^*_{g,N}}{n_{g,N}} \right)^6 \right) $, and focus on each term in the product separately. 

We first consider the term $ \frac{63 \Sigma_1^2}{\sigma_{\min}^2} \frac{\| n - n^* \|_\infty^3}{N^3}$. We observe that if we assume Condition 1 and

\[ N \geq \underbrace{\left( \frac{200^2 \cdot G^3 C_N^3}{\Sigma_1^{11/2} \sigma_{\min}^{1/2}} \right)^{2/3}}_{C_3(G, \Sigma_1, \sigma_{\min})}  \tag{Condition 3}, \]

then

\begin{align*}
    N^{3} &  \geq \frac{ 200^2 \cdot G^3 C_N^3 }{ \Sigma_1^{11/2} \sigma_{\min}^{1/2} } \cdot N^{3/2} \tag{by Condition 3}    \\
    &  \geq  \frac{ 32256 \cdot G^3 C_N^3 }{ \Sigma_1^{11/2} \sigma_{\min}^{1/2} } \cdot N^{3/2}
    \\
    & = \frac{63 \cdot 512 \cdot G^3 C_N^3}{\Sigma_1^{11/2} \sigma_{\min}^{1/2}} \cdot N^{3/2}
    \\
    & = \frac{63 \Sigma_1^2}{\sigma_{\min}^2} \cdot \frac{512 G^3 C_N^3}{\Sigma_1^3} \left( \frac{\sigma_{\min}}{\Sigma_1} N \right)^{3/2}
    \\
    & =  \frac{63 \Sigma_1^2}{\sigma_{\min}^2} \cdot \left( \frac{8 G C_N}{\Sigma_1} \sqrt{ \frac{\sigma_{\min}}{\Sigma_1} N } \right)^3 
    \\
    & \geq  \frac{63 \Sigma_1^2}{\sigma_{\min}^2} \cdot \| n - n^* \|_\infty^3 , \tag{by Condition 1}
\end{align*}

which implies

\[\frac{63 \Sigma_1^2}{\sigma_{\min}^2} \frac{\| n - n^* \|_\infty^3}{N^3} \leq 1.\]

Next, we consider the term $\max_g \left( \frac{n^*_{g,N}}{n_{g,N}} \right)^6 $. We have by the proof of Theorem 1 in \cite{aznag_active_2023} that if $N$ is sufficiently large such that $\frac{\lVert n - n^* \rVert_\infty}{\min_h n^*_{h,N}} \leq 1$, then 

\[
\max_g \frac{n_{g,N}^*}{n_{g,N}} \leq \frac{1}{1 - \frac{\lVert n - n^* \rVert_\infty}{\min_h n^*_{h,N}}}. 
\]

Hence, we have that as long as Condition 1 is satisfied and

\[    N \geq \underbrace{ \frac{(16 G C_N)^2}{\Sigma_1 \sigma_{\min}} }_{C_4(G,\Sigma_1, \sigma_{\min})} \tag{Condition 4},\]

then, dividing both sides of that inequality by $2\sqrt{N}$, we have

\begin{align*}
    \frac{1}{2} & \geq \frac{8 G C_N}{\sqrt{ \Sigma_1 \sigma_{\min} N }}
    \\
    & = \frac{ \frac{8 G C_N}{\Sigma_1} \sqrt{ \frac{\sigma_{\min}}{\Sigma_1} N } }{ \frac{\sigma_{\min}}{\Sigma_1} N }
    \\
    & = \frac{\lVert n - n^* \rVert_\infty}{\min_h n^*_{h,N}},
\end{align*}

and therefore we have: 

\[ \max_g \left( \frac{n^*_{g,N}}{n_{g,N}} \right)^6 \leq \left(\frac{1}{1- 1/2} \right)^6 \leq 64.\]

This gives us that: 

\[\frac{63 \Sigma_1^2}{\sigma_{\min}^2}  \max_g \left( \frac{n^*_{g,N}}{n_{g,N}} \right)^6  \frac{\| n - n^* \|_\infty^3}{N^3} \leq 1 \cdot 64\]

if Conditions 1, 3, and 4 hold. 
    \end{itemize}

Hence, if we let: 

\[C(G, \Sigma_1, \sigma_{\min}) := \max\left\{ C_1(\Sigma_1, \sigma_{\min}), C_2(G,\Sigma_1, \sigma_{\min}), C_3(G,\Sigma_1, \sigma_{\min}) ,C_4(G,\Sigma_1, \sigma_{\min}) \right\}\]

then we have that as long as $N \geq C(G, \Sigma_1, \sigma_{\min})$, then

\begin{align*}
    \frac{R_1(n) - R_1(n^*)}{R_1(n^*)} & \leq 1 + 64
    \\
    & \leq 65,
\end{align*}

which means that 

\[ R_1(n)  \leq C' \cdot R_1(n^*)\]

for some absolute constant $C' > 0$, and so: 

\begin{align*}
    \left(  \widehat{\mu}_{\text{WS-UCB}} - \mu \right)^2 & \leq C' \left(   R_1(n_N) \log \frac{2}{\delta} \right) 
    \\
    & \leq C' \cdot R_1(n^*) \cdot \log \frac{2}{\delta}
    \\
    & \leq \left( C' \log \frac{2}{\delta} \right)G \frac{\Sigma_1(\mathcal{G})}{N}. \tag{Lemma \ref{lemma:ronestar}} 
\end{align*}

Now for the analysis when $N <C(G, \Sigma_1, \sigma_{\min})$. By construction of Algorithm \ref{alg:one}, we have that for all $g \in [G]$: 

\[n_{g,N} \geq \frac{\tau}{G} N. \]

So by \ref{hoeff} we have: 

\begin{align}
    \left( \widehat{\mu}_{\text{WS-UCB}} - \mu \right)^2 &  \leq 2 \sum_{g=1}^{G} \frac{\sigma_g^2}{n_{g,N}} \cdot \log \frac{2}{\delta}
    \\
    & \leq C \cdot \frac{G}{\tau}  \sum_{g=1}^{G} \frac{ \sigma_g^2}{N}  \cdot \log \frac{2}{\delta}
    \\
    &  \leq \left( C \cdot \frac{G}{\tau}  \log \frac{2}{\delta} \right) \frac{\Sigma_1(\mathcal{G})}{N}.     \label{eqn:ucb.geq.n}
\end{align}

We follow \cite{aznag_active_2023} in treating $G$ as a constant, and do the same with $\tau$ to obtain: 

\[\left( \widehat{\mu}_{\text{WS-UCB}}- \mu \right)^2 = \tilde{\mathcal{O}} \left(\frac{\Sigma_1(\mathcal{G})}{N} \right).\] 
    
\end{proof}

\begin{remark}[Dependence on $\tau$ and $G$.]
\label{remark:dependences}
    We note that by \ref{eqn:ucb.geq.n}, when the dependence on the constants $\tau$ and $G$ is made explicit, the rate is 

\[
\left| \widehat{\mu}_{\text{WS-UCB}} - \mathbb{E}[Y] \right|^2 = \tilde{\mathcal{O}}\left( \frac{G \cdot \Sigma_1(\mathcal{G})}{N \cdot \tau} \right).
\] 

We discuss the dependencies on $\tau$ and $G$ further below.
\end{remark}

\begin{proof}[Proof of Theorem \ref{thm:lb1}]
    The proof follows from the fact that the Neyman allocation corresponding to this $\mathcal{G}$ yields the smallest value of $\Sigma_1(\mathcal{G}')$ over all possible stratification schemes $\mathcal{G}'$ and classical results about the estimation of the mean of IID Bernoulli random variables (ex: Lemma 1 from \cite{hanneke_negative_2010}). The idea is that first, $\Sigma_1(\mathcal{G})$ minimizes $\Sigma_1(\mathcal{G}')$ among all possible stratification schemes $\mathcal{G}'$ (precisely because $\mathcal{G}$ aligns with the way the data are generated) and the fastest possible rate for estimating the conditional mean of $Y$ on each stratum of $\mathcal{G}$ is simply the conditional variance divided by the number of labels requested on that stratum (ex: Lemma 1 from \cite{hanneke_negative_2010}). Combining the results yields the bound. 
\end{proof}



\begin{proof}[Proof of Theorem \ref{thm:disagbridge}]
    The proof essentially amounts to showing that the value of $\Sigma_1(\mathcal{G})$ produced by Algorithm \ref{alg:two} is bounded from above by $\nu + \exp(c \cdot (-N/\log(N)))$. To do this, we first show that: 

    \[
    \mathbb{E}[(\widehat{h}_{}(X)-Y)^2] \geq C \cdot \Sigma_1(\mathcal{G}).
    \]

We first use the law of total expectation to obtain that:

\begin{align}
    \mathbb{E}[(\widehat{h}(X)-Y)^2] 
    & = \sum_{j \in J} \mathbb{E}[(j-Y)^2 \mid \widehat{h}(X)=j]\Pr(\widehat{h}(X)=j)
    \label{eqn:riskdecomp}
\end{align}

where $J$ is the image of $\widehat{h}$. We will now proceed with a bias-variance decomposition. We have that for all $j \in J$,

\[
\mathbb{E}\big[ (\widehat{h}(X)-Y)^2 \mid \widehat{h}(X) = j \big] = \mathbb{E}\big[ (j - Y)^2 \mid \widehat{h}(X) = j \big].
\]

Define

\[
\mu_j := \mathbb{E}[Y \mid \widehat{h}(X) = j].
\]

Then we have that 

\[
\mathbb{E}[(j - Y)^2 \mid \widehat{h}(X) = j] = \mathbb{E}[(j - \mu_j + \mu_j - Y)^2 \mid \widehat{h}(X) = j].
\]

Expanding using the identity 
\[
(a - c + c - b)^2 = (a - c)^2 + (c - b)^2 + 2(a - c)(c - b),
\]
we get:

\begin{align*}
\mathbb{E}[(j - \mu_j + \mu_j - Y)^2 \mid \widehat{h}(X) = j] &= (j - \mu_j)^2 + \mathbb{E}[(\mu_j - Y)^2 \mid \widehat{h}(X) = j] \
\\
&\quad + 2 (j - \mu_j) \cdot \mathbb{E}[(\mu_j - Y) \mid \widehat{h}(X) = j].
\end{align*}

Note that the last term, \(2 (j - \mu_j) \cdot \mathbb{E}[(\mu_j - Y) \mid \widehat{h}(X) = j]\), is \(0\) by linearity of expectation and the definition of \(\mu_j\).

So ultimately we have that: 

\begin{align*}
    \mathbb{E}\big[ (\widehat{h}(X)-Y)^2 \mid \widehat{h}(X) = j \big] 
    &= (j - \mu_j)^2 + \mathbb{E}[(\mu_j - Y)^2 \mid \widehat{h}(X) = j].
\end{align*}

Now using \ref{eqn:riskdecomp} we can rewrite:

\begin{align}
    \mathbb{E}[(\widehat{h}(X)-Y)^2] 
    &= \sum_{j \in J} 
    \Big[ (j - \mu_j)^2 + \mathbb{E}[(\mu_j - Y)^2 \mid \widehat{h}(X) = j] \Big] 
    \cdot \Pr(\widehat{h}(X) = j).
    \label{eqn:riskdecomp0}
\end{align}

If we let $\sigma_j'^2 = \Var(Y \mid A_j)$ for each $j \in J$, then by definition of $\mu_j$ we have that:

\begin{align*}
    \mathbb{E}[(\widehat{h}(X)-Y)^2] 
    &\geq \sum_{j \in J} \sigma_j'^2 \, \Pr(\widehat{h}(X) = j) \\
    &= \Sigma_1(\mathcal{G}).
\end{align*}

Now by Assumption \ref{assumption:sub}, we have that: 

\begin{align*}
    \mathbb{E}[(\widehat{h}(X)-Y)^2] \leq \nu + \exp(c \cdot (-N/\log(N))),
\end{align*}

so we have that: 

\[ 
\Sigma_1(\mathcal{G}) \leq \nu + \exp(c \cdot (-N/\log(N)))
\]

for some constant $c > 0$. Applying Theorem \ref{thm:varucb.adj} with our choice of $\mathcal{G}$ and the remaining label budget of $N/2$ we obtain that: 

\[\left( \widehat{\mu}_{\text{PB}} - \mathbb{E}[Y] \right)^2 = \tilde{\mathcal{O}} \left(\frac{ \nu + \exp(c \cdot (-N/\log(N)))}{N} \right).\]
    
\end{proof}

\begin{remark}[More on the Dependence on $\tau$ and $G$ (|$\mathcal{G}|$)]
    We note that by Remark \ref{remark:dependences}, it is straightforward to show that when the dependence on $\tau$ and $\mathcal{G}$ is made explicit, the rate is $\left| \widehat{\mu}_{\text{PB}} - \mathbb{E}[Y] \right|^2
    = \tilde{\mathcal{O}}\left( \left| \mathcal{G} \right| \cdot  \left( \frac{\nu + \exp(c \cdot (-N/\log(N)))}{N \cdot \tau} \right) \right)$ where $\left| \mathcal{G} \right|$ is the number of strata. 
\end{remark}

\begin{remark}[Different Loss Functions]
We note that Theorem \ref{thm:disagbridge} still holds if one is interested in an asymmetric misclassification cost, $a \cdot \mathbf{1}\{h(X) = 0, Y = 1\} + b \cdot \mathbf{1}\{h(X) = 1, Y = 0\}$ instead of the ordinary squared loss. We assume here that $Y$ is binary.  
\end{remark}

\begin{proof}
    To see why we first start by doing an analogous decomposition to \ref{eqn:riskdecomp}: 

\begin{align*}
    & \mathbb{E}\left[ a \cdot \mathbf{1}\{\widehat{h}(X) = 0,\; Y = 1\} + b \cdot \mathbf{1}\{\widehat{h}(X) = 1,\; Y = 0\} \right] \\
   &= \mathbb{E} \left[ b \cdot \mathbf{1}\{Y = 0\} \mid \widehat{h}(X) = 1 \right] \cdot P(\widehat{h}(X) = 1) \notag 
     + \mathbb{E} \left[ a \cdot \mathbf{1}\{Y = 1\} \mid \widehat{h}(X) = 0 \right] \cdot P(\widehat{h}(X) = 0),
\end{align*}

and this is precisely equal to: 

\[    = b\cdot \mathbb{E}\big[ (\widehat{h}(X) - Y)^2 \mid \widehat{h}(X) = 1 \big] \cdot P(\widehat{h}(X) = 1) \notag 
     + a \cdot \mathbb{E}\big[ (\widehat{h}(X) - Y)^2 \mid \widehat{h}(X) = 0 \big] \cdot P(\widehat{h}(X) = 0).
\]

Therefore the rest of the proof for this alternative misclassification cost is analogous to the proof of Theorem \ref{thm:disagbridge}, just with adjustments for the constants $a$ and $b$.  
\end{proof}

This result is helpful in a setting where $X \sim \text{Unif}[0,1]$ and
\[
Y \sim 
\begin{cases}
\text{Bern}(0), & X \leq 0.5,\\[3pt]
\text{Bern}(0.25), & X > 0.5.
\end{cases}
\]
Here, an alternative loss would be helpful because the typical Bayes optimal classifier based on the squared loss would not distinguish between $X \leq 0.5$ and $X \geq 0.5$, but the Bayes optimal classifier resulting from an asymmetric misclassification loss would.

\begin{proof}[Proof of Corollary \ref{cor:binary-classical}.]
    By Theorem 5 of \cite{hanneke_rates_2011}, $\mathcal{S} = A^2$ learns a classifier with exponential savings in Stage~1 of \textsc{PartiBandits}. Hence Assumption \ref{assumption:sub} is satisfied, and Theorem \ref{thm:disagbridge} follows.
\end{proof}

\begin{proof}[Proof of Corollary \ref{cor:binary-puchkin}.]
    By Theorem 4.1 of \cite{puchkin_exponential_2022}, Algorithm~4.2 of \cite{puchkin_exponential_2022} learns a classifier with exponential savings in Stage~1 of \textsc{PartiBandits}. Hence Assumption \ref{assumption:sub} is satisfied, and Theorem \ref{thm:disagbridge} follows.
\end{proof}

\begin{proof}[Proof of Corollary \ref{cor:multiclass}]
    As demonstrated by Corollary 1 of \cite{agarwal_selective_2013}, Algorithm 1 of \cite{agarwal_selective_2013} learns a classifier with exponential savings in Stage 1 of \textsc{PartiBandits}. Hence Assumption \ref{assumption:sub} is satisfied, and Theorem \ref{thm:disagbridge} follows. 
\end{proof}

\begin{proof}[Proof of Corollary \ref{cor:ambig}.]
We assume without loss of generality that $\exp\left(  \tfrac{-N}{\log(N)}\right) \leq 1$ (the result can be easily adjusted when this assumption does not hold). 

    By Theorem \ref{thm:disagbridge}, it is sufficient to show that Assumption \ref{assumption:sub} is satisfied: 

    \[
    \mathbb{E}[(\hat{h}^*(X)-Y)^2] - \nu
    \;\lesssim\; \exp\left(c \cdot \tfrac{-N}{\log(N)}\right),
    \]

    for some constant $c >0$. For this, it is sufficient to show that the excess risk of $\widehat{h}$ and $\widehat{h}^*$ differ by at most $\exp\left(c \cdot \tfrac{-N}{\log(N)}\right)$, since $\widehat{h}$ already satisfies assumption \ref{assumption:sub} (Corollary \ref{cor:binary-puchkin}). Formally, this means showing that: 

    \[
    \left| \mathbb{E}[(\hat{h}(X)-Y)^2] - \mathbb{E}[(\hat{h}^*(X)-Y)^2] \right| \lesssim \exp\left(c \cdot \tfrac{-N}{\log(N)}\right). 
    \]

    We have that: 

    \begin{align*}
        \left| \mathbb{E}[(\hat{h}(X)-Y)^2] - \mathbb{E}[(\hat{h}^*(X)-Y)^2] \right| & = \left|
\mathbb{E}\big[(\hat{h}(X)-Y)^2 - (\hat{h}^*(X)-Y)^2 \big]
\right| \tag{linearity}
\\
& = \left| \mathbb{E}\!\left[\hat{h}(X)^2 - 2\hat{h}(X)Y + Y^2 \;-\; \hat{h}^*(X)^2 + 2\hat{h}^*(X)Y - Y^2\right] \right|
\\
&= \left| \mathbb{E}\!\left[\hat{h}(X)^2 - \hat{h}^*(X)^2 - 2Y\big(\hat{h}(X)-\hat{h}^*(X)\big) : \widehat{h} \neq \widehat{h}^* \right] \right|
\\
&\leq 
\left|
\mathbb{E}\!\left[\hat{h}(X)^2 - \hat{h}^*(X)^2 : \widehat{h} \neq \widehat{h}^*\right]
\right|
\\
& + 
2\left|
\mathbb{E}\!\left[Y\big(\hat{h}(X)-\hat{h}^*(X)\big) : \widehat{h} \neq \widehat{h}^*\right]
\right| \tag{triangle inequality}.
\end{align*}

We note that $2\left|
\mathbb{E}\!\left[Y\big(\hat{h}(X)-\hat{h}^*(X)\big) : \widehat{h} \neq \widehat{h}^*\right]
\right| \leq c_1 \exp\left(  \tfrac{-N}{\log(N)}\right)$ for some $c_1 \geq 0$ by construction of $\widehat{h}^*$. Furthermore, we have using the difference of squares decomposition that: 

\begin{align*}
    \left|
\mathbb{E}\!\left[\hat{h}(X)^2 - \hat{h}^*(X)^2 : \widehat{h} \neq \widehat{h}^*\right]
\right| & = \left|
\mathbb{E}\!\left[
(\hat{h}(X) - \hat{h}^*(X))(\hat{h}(X) + \hat{h}^*(X)) : \widehat{h} \neq \widehat{h}^*
\right]
\right|
\\
& \leq 
2\mathbb{E}\!\left[\left|
(\hat{h}(X) - \hat{h}^*(X))\right| : \widehat{h} \neq \widehat{h}^*
\right],
\end{align*}

where the inequality follows from Jensen's inequality, definition of $\widehat{h}^*$, and the assumption that $\exp\left(  \tfrac{-N}{\log(N)}\right) \leq 1$. Again by definition of $\widehat{h}^*$, we have that $2\mathbb{E}\!\left[\left|
(\hat{h}(X) - \hat{h}^*(X))\right| : \widehat{h} \neq \widehat{h}^*
\right] \leq c_2 \exp\left(  \tfrac{-N}{\log(N)}\right) $ for some $c_2 > 0$. Thus, we have that $\left|
\mathbb{E}\!\left[\hat{h}(X)^2 - \hat{h}^*(X)^2 : \widehat{h} \neq \widehat{h}^*\right]
\right| \leq c_2 \exp\left(  \tfrac{-N}{\log(N)}\right)$, and therefore that: 

    \[
    \left| \mathbb{E}[(\hat{h}(X)-Y)^2] - \mathbb{E}[(\hat{h}^*(X)-Y)^2] \right| \lesssim \exp\left(c \cdot \tfrac{-N}{\log(N)}\right). 
    \]

\end{proof}

\begin{proof}[Proof of Theorem \ref{thm:lb2}]
    As shown in Theorem \ref{thm:lb1}, the best possible performance achievable by any algorithm is \( \frac{C}{\sqrt{N}}\) for some constant $C > 0$ that may depend on other parameters of the problem. So determining the lower bound for any algorithm returns an estimate of $\mu$ becomes a task of determining the lower bound of $C > 0$ for any algorithm. We were able to deduce the lower bound of $C$ in Theorem \ref{thm:lb1} by restricting to a particular situation (that in which the analyst basically is able to leverage $X$ perfectly in a sense). We consider here a more general problem setting where the analyst is limited to estimators $\widehat{\mu}$ that compute weighted aggregates of stratum means over partitions consisting of exactly two strata, $S_0$ and $S_1$, which make up some stratification scheme $\mathcal{G}$. The analyst learns $S_0$ and $S_1$ through some procedure after $N/2$ label requests, and uses the rest of the label budget to estimate the means within each stratum and then aggregating them at the end. This is a best case scenario because this is aligns with the data generation process and helps approximate the optimal value of $C > 0$ (one only needs to learn the best choice of $S_0$ and $S_1$ to arrive at the optimal $C$, as shown in Theorem \ref{thm:lb1}). Hence, the lower bound on $(\widehat{\mu} - \mathbb{E}[Y])^2)$ is simply the lower bound on $\Sigma_1(\mathcal{G})$, so we focus on the latter quantity in this proof.
    
    Assume without loss of generality that $\mu_0 = \mathbb{E}[Y \mid X \in S_0] \leq 1/2$ and $\mu_1 = \mathbb{E}[Y \mid X \in S_1] \geq 1/2$. There exists a classifier $h$ such that $h^{-1}(0) = S_0$ and $h^{-1}(1) = S_1$. Recall from \ref{eqn:riskdecomp0} that we have the following for any $h$: 

    \begin{align*}
        \mathbb{E}[(h(X)-Y)^2] & =  \big((1 - \mu_1)^2 + \mathbb{E}[(\mu_1 - Y)^2 \mid {h}(X) = 1]\big) \cdot P({h}(X) = 1) \notag
        \\
        & + \big(\mu_0^2 + \mathbb{E}[(\mu_0 - Y)^2 \mid {h}(X) = 0]\big) \cdot P({h}(X) = 0)
    \end{align*}

    Since $\mu_0 \leq 1/2$ and $\mu_1 \geq 1/2$, we have that $(1 - \mu_1)^2 \leq (1 - \mu_1)\mu_1$ and $\mu_0^2 \leq \mu_0( 1- \mu_0)$. Thus, we have that: 

    \begin{align*}
        \mathbb{E}[(h(X)-Y)^2] & \leq  \big( (1 - \mu_1)\mu_1 + \mathbb{E}[(\mu_1 - Y)^2 \mid {h}(X) = 1]\big) \cdot P({h}(X) = 1) \notag
        \\
        & + \big( \mu_0( 1- \mu_0) + \mathbb{E}[(\mu_0 - Y)^2 \mid {h}(X) = 0]\big) \cdot P({h}(X) = 0)
    \end{align*}

    Since $Y$ is Bernoulli, we have by the corresponding variance formula that: 

    \begin{align*}
        \mathbb{E}[(h(X)-Y)^2] & \leq 2 \mathbb{E}[(\mu_1 - Y)^2 \mid {h}(X) = 1] \cdot P({h}(X) = 1) \notag
        \\
        & + 2  \mathbb{E}[(\mu_0 - Y)^2 \mid {h}(X) = 0]\cdot P({h}(X) = 0)
        \\
        & \leq 2 \Sigma_1(\mathcal{G}).
    \end{align*}

    What this shows is that if we can lower bound $\mathbb{E}[(h(X)-Y)^2]$, then we can lower bound $\Sigma_1(\mathcal{G})$. In their discussion of Theorem 4, \cite{hanneke_rates_2011} noted that the rate: 

    \[  \mathbb{E}[(h(X)-Y)^2]  - \nu \leq  + \exp(c \cdot (-N/\log(N))) \]

    is minimax optimal in precisely this situation where the strata map on to threshold classifiers (\textit{see also} \cite{castro_minimax_2008, cohn_improving_1994, burnashev_interval_1974}), so we have that there is a $D \in \mathcal{D}$ such that with probability at least $\delta$,

    \[ \mathbb{E}_D[(h(X)-Y)^2]  \geq K \left( \nu   + \exp(c \cdot (-N/\log(N)))\right),\]

    for some absolute constant $K > 0$, and therefore that: 

    \[2 \Sigma_1(\mathcal{G}) \geq K \left( \nu   + \exp(c \cdot (-N/\log(N)))\right),\]

    for $(X,Y) \sim D$. 
\end{proof}

\subsection*{Additional Details Regarding Simulations}

\subsubsection*{Details Regarding Simulations for Figure \ref{fig:splash}}

In both simulations, each dataset contains 10,000 unlabeled instances. For each label budget from 80 to 140, we run 500 simulations. For each run, we compute the absolute difference between the mean estimated by the algorithm and the true mean; we then report the 90th percentile of these 500 errors.

\subsubsection*{Details Regarding AFC Simulations}

 In each simulation, we construct the mapping from ZIP code to the probability of being Black using the 10,000-patient sample drawn for that run. This means that the probability mapping $X$ is re-estimated in every simulation based on the ZIP code–race distribution within the sampled subset.


\subsection*{Additional Notes}



\begin{note}[Notes about Theorem \ref{thm:varucb.adj}]
When the dependence on the constants $\tau$ and $G$ is made explicit, the rate is 

\[
\left| \widehat{\mu}_{\text{WS-UCB}} - \mathbb{E}[Y] \right|^2 = \tilde{\mathcal{O}}\left( \frac{G \cdot \Sigma_1(\mathcal{G})}{N \cdot \tau} \right).
\]

Following \cite{aznag_active_2023}, we treat $G$ as a constant, and do the same for $\tau$. As discussed above and shown here, the dependence on $\tau$ vanishes for large $N$ relative to $\sigma_{\min}$, and for all other $N$, the rate still holds with slightly larger constants (including a linear factor of $\tau^{-1}$). We also discuss the special cases when $\tau \in \{0,1\}$. 

Setting $\tau = 0$ allocates all labels to Variance-UCB, achieving optimal rates when both $\sigma_{\min}$ and the label budget are sufficiently large. For sufficiently large $N$ relative to $\sigma_{\min}$, $\tau$ is no longer relevant to the problem and the ordinary rate holds thanks to the main result of \cite{aznag_active_2023}. However, a positive $\tau$ ensures that at least some samples are drawn uniformly from every group, preserving the guarantee of Theorem~\ref{thm:varucb.adj} with only slightly larger constants. For large label budgets, the effect of $\tau$ disappears and the optimal rates of \cite{aznag_active_2023} apply. Since the rate’s dependence on the label budget remains unchanged (only the constants differ), we present the simplified rate $\tilde{\mathcal{O}}\left( \frac{\Sigma_1(\mathcal{G})}{N} \right)$ in Theorem~\ref{thm:varucb.adj}. 

When $\tau$ is small (close to $0$), we rely more heavily on the Variance-UCB adaptive sampling strategy, which is reasonable if the label budget is large and/or $\sigma_{\min}$ is not too small. When $\tau$ is large (close to $1$), the procedure defaults to simple stratified random sampling, ensuring the stated dependence on the label budget in the theorem but resulting in slightly less optimal constants. In the final bounds, $\tau$ essentially appears as a constant in the denominator, but for sufficiently large label budgets, its effect becomes negligible as the guarantees of the Variance-UCB procedure dominate.

The WarmStart in Algorithm~1 serves as a buffer, ensuring that Variance-UCB remains effective when the overall label budget is small and the algorithm might otherwise undersample low-variance groups. As \cite{aznag_active_2023} discuss, the regret of Variance-UCB can scale inversely with $\sigma_{\min}$, the smallest group variance. By assigning each group a minimum number of samples, the warmstart helps guarantee the algorithm’s fast-rate properties. For small $N$, the risk bound behaves as $\tilde{\mathcal{O}}(1/\tau N)$, where $\tau$ is the warmstart proportion. As $N$ grows and the fast-rate conditions hold, the dependence on $\tau$ vanishes and the sharper rates of \cite{aznag_active_2023} apply. We follow \cite{aznag_active_2023} in focusing only on the dependence on the label budget in the bound.
\end{note}

\begin{note}[Note about Empirical Illustration.]

Here present additional experiments using alternative data-generating processes, varying degrees of class separation, and asymmetric class distributions. We also consider the comparison of PartiBandits to alternative baselines.

\begin{figure}
    \centering
    \includegraphics[width=1\linewidth]{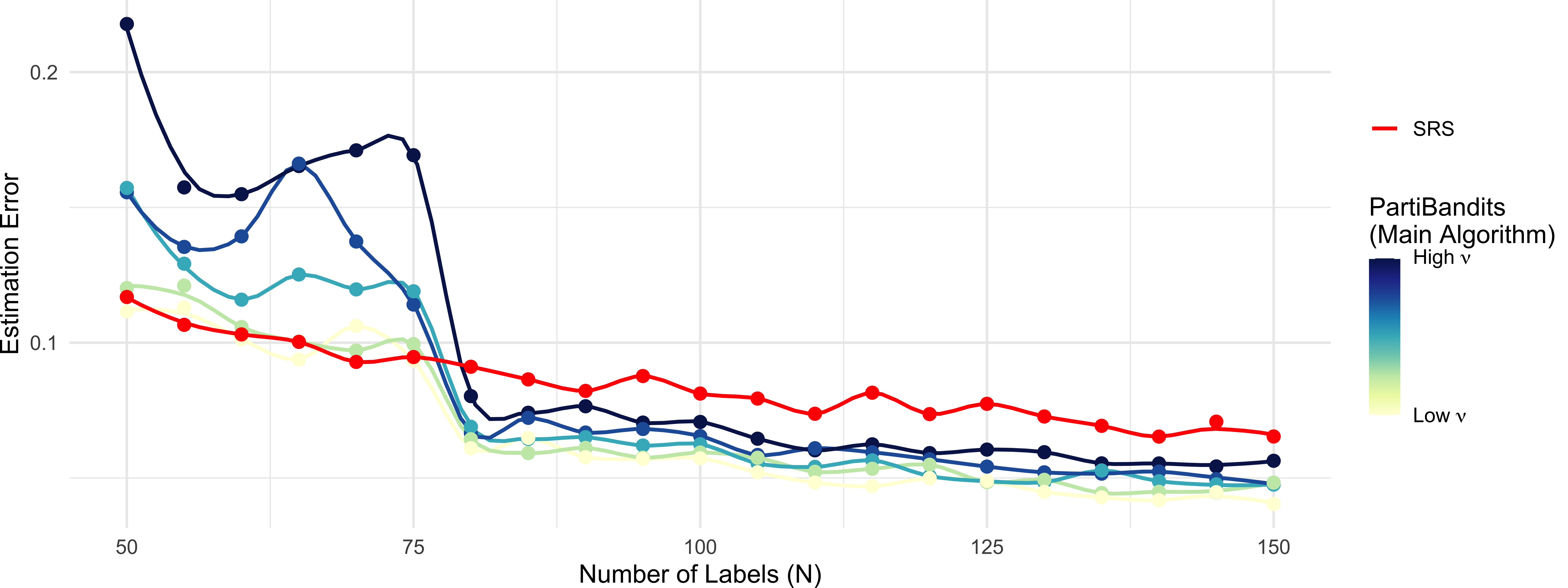}
    \caption{This plot compares the performance of PartiBandits to SRS when the labels are generated according to the following logistic data generating process: $X \sim \mathrm{Unif}[0,1]$ and $Y \sim \mathrm{Bernoulli}\!\left(\frac{1}{1 + \exp[-(\beta_0 + \beta_1 X)]}\right)$, where $\beta_0 = -1/\nu$ and $\beta_1 = 2/\nu$.  This corresponds to a Logit-type DGP, with $1/\nu$ governing the steepness of the logistic curve. For each label budget, we generate 500 hypothetical datasets in this way, apply SRS and PartiBandits to each, and compute the resulting error rates. We then take the 90th percentile of these error rates to obtain a classical 90\% high-probability/confidence bound. PartiBandits eventually outperforms SRS with relatively fewer samples, with performance gains becoming more pronounced when $X$ better predicts $Y$ and $\nu$ decreases.}
    \label{fig:logit}
\end{figure}

\begin{figure}
    \centering
    \includegraphics[width=1\linewidth]{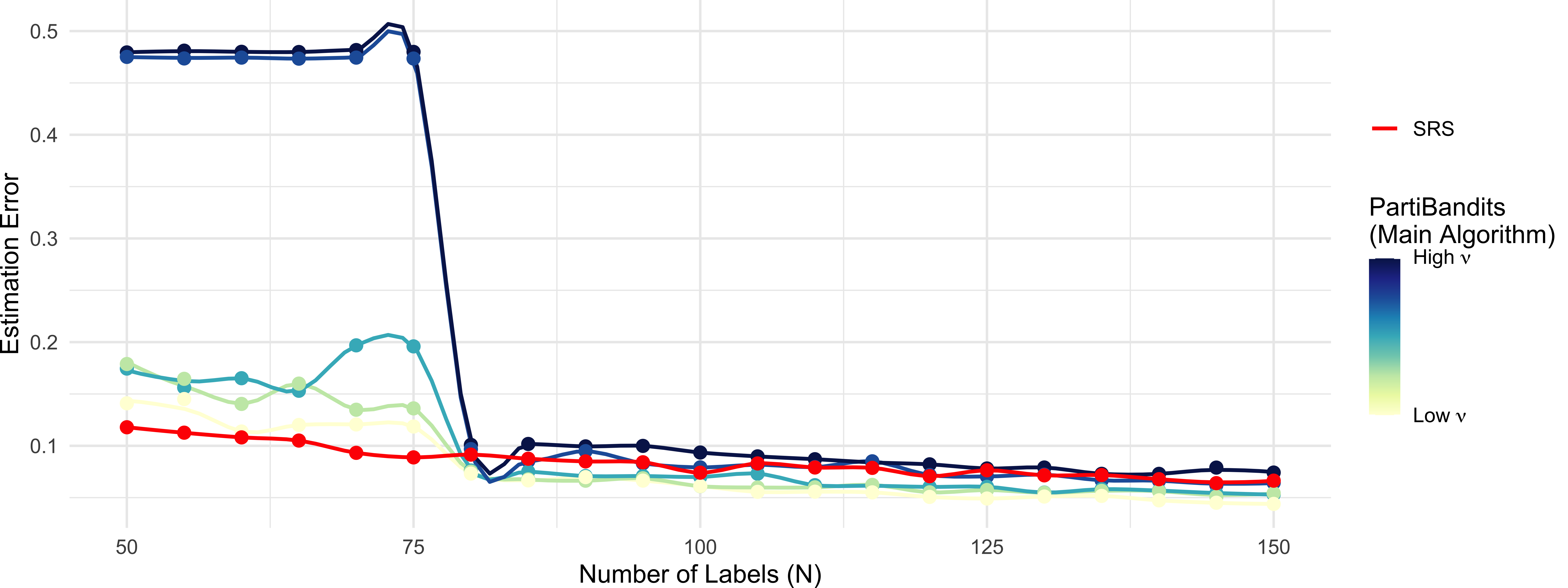}
    \caption{This plot compares the performance of PartiBandits to SRS when the labels are generated according to the following asymmetric probit data generating process: $X \sim \mathrm{Unif}[-5,5]$ and $Y \sim \mathrm{Bernoulli}\!\left(\Phi\bigl((1/\nu)\,(X - 0.25)\bigr)\right)$, where $\Phi(\cdot)$ denotes the standard normal CDF. This corresponds to a Probit-type DGP, with $1/\nu$ controlling the steepness of the probability curve and $X \approx 0.25$ marking the midpoint threshold. For each label budget, we generate 500 hypothetical datasets in this way, apply SRS and PartiBandits to each, and compute the resulting error rates. We then take the 90th percentile of these error rates to obtain a classical 90\% high-probability/confidence bound. PartiBandits eventually outperforms SRS with relatively fewer samples, with performance gains becoming more pronounced when $X$ better predicts $Y$ and $\nu$ decreases.}
    \label{fig:probit}
\end{figure}

\begin{figure}
    \centering
    \includegraphics[width=1\linewidth]{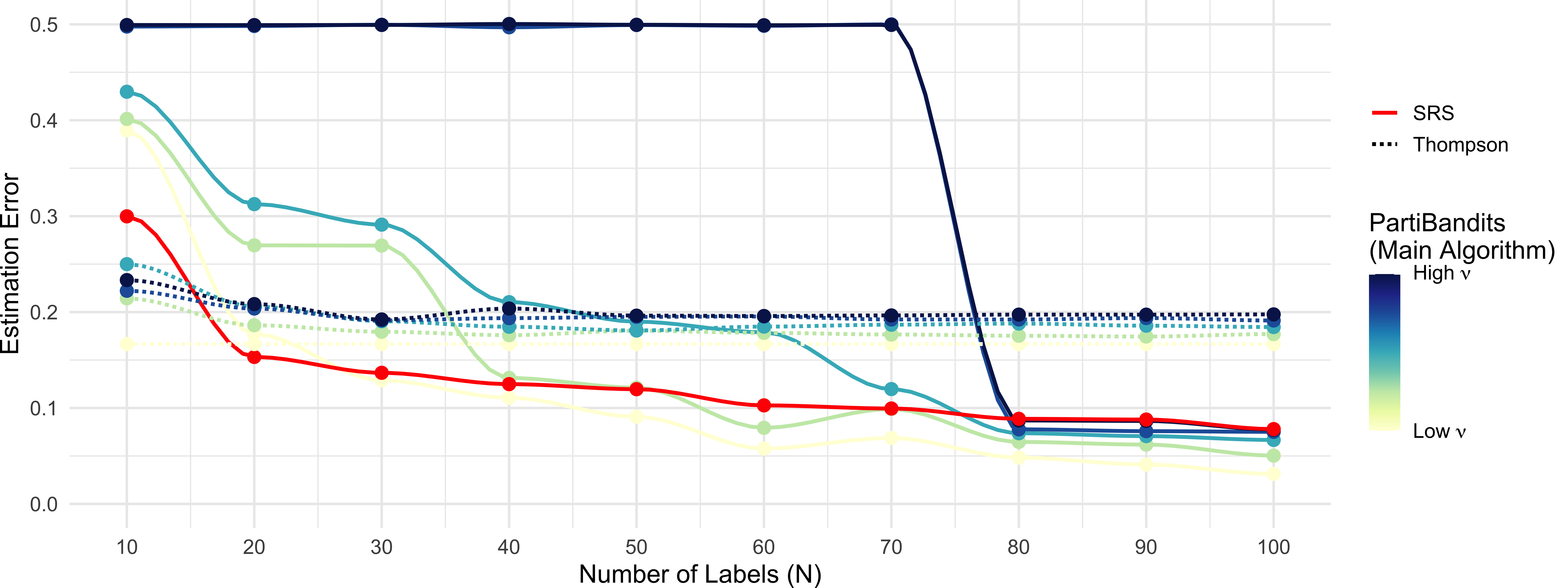}
    \caption{This plot compares the performance of PartiBandits to SRS and Thompson sampling and SRS for label budgets from $10$ to $100$. Here, $X \sim \text{Unif}[0,1]$ and $Y = \textbf{1}\left\{ X \geq 0.5\right\}$, with a fixed fraction of $Y$'s (between 0\% and 10\%) randomly flipped to introduce noise.  The proportion of flipped labels is equal to $\nu$ by definition. For each label budget, we generate 500 hypothetical datasets in this way, apply SRS, Thompson sampling, and PartiBandits to each, and compute the resulting error rates. To execute the Thompson sampling, we use the standard Beta-Bernoulli Thompson Sampling algorithm with an uninformative prior $\mathrm{Beta}(1,1)$. At each round, the algorithm samples a success probability from each arm’s posterior, selects the arm with the highest draw, observes a Bernoulli reward, and updates the corresponding posterior. In our setup, we ran $T = 3000$ rounds with $K = 3$ arms (true $p = (0.1, 0.5, 0.8)$) for the prototype and $K = 5$ bins over $[0,1]$ with a threshold of $0.5$ for the binned variant. We then take the 90th percentile of these error rates to obtain a classical 90\% high-probability/confidence bound. PartiBandits eventually outperforms SRS and Thompson sampling with relatively fewer samples, with performance gains becoming more pronounced when $X$ better predicts $Y$ and $\nu$ decreases. We also observe that, over time, Thompson sampling ceases to yield better mean estimates, consistent with theoretical results suggesting that this procedure can yield biased mean estimates in common settings \citep{shin_are_2019}.}
    \label{fig:thomopson}
\end{figure}

\end{note}

\newpage

\newpage


\begin{thebibliography}{}

\bibitem[Agarwal, 2013]{agarwal_selective_2013}
Agarwal, A. (2013).
\newblock Selective sampling algorithms for cost-sensitive multiclass prediction.
\newblock In {\em Proceedings of the 30th {International} {Conference} on {Machine} {Learning}}, pages 1220--1228. PMLR.
\newblock ISSN: 1938-7228.

\bibitem[Andrus et~al., 2021]{andrus_what_2021}
Andrus, M., Spitzer, E., Brown, J., and Xiang, A. (2021).
\newblock "{What} {We} {Can}'t {Measure}, {We} {Can}'t {Understand}": {Challenges} to {Demographic} {Data} {Procurement} in the {Pursuit} of {Fairness}.
\newblock arXiv:2011.02282 [cs].

\bibitem[Aznag et~al., 2023]{aznag_active_2023}
Aznag, A., Cummings, R., and Elmachtoub, A.~N. (2023).
\newblock An active learning framework for multi-group mean estimation.
\newblock {\em Advances in Neural Information Processing Systems}, 36:32602--32635.

\bibitem[Balcan et~al., 2006]{balcan_agnostic_2006}
Balcan, M.-F., Beygelzimer, A., and Langford, J. (2006).
\newblock Agnostic active learning.
\newblock In {\em Proceedings of the 23rd international conference on {Machine} learning}, {ICML} '06, pages 65--72, New York, NY, USA. Association for Computing Machinery.

\bibitem[Bosch et~al., 2003]{bosch_optimum_2003}
Bosch, V., , and Wildner, R. (2003).
\newblock Optimum {Allocation} of {Stratified} {Random} {Samples} {Designed} for {Multiple} {Mean} {Estimates} and {Multiple} {Observed} {Variables}.
\newblock {\em Communications in Statistics - Theory and Methods}, 32(10):1897--1909.
\newblock Publisher: Taylor \& Francis \_eprint: https://doi.org/10.1081/STA-120023258.

\bibitem[Burnashev and Zigangirov, 1974]{burnashev_interval_1974}
Burnashev, M. and Zigangirov, K. (1974).
\newblock An {Interval} {Estimation} {Problem} for {Controlled} {Observations}.
\newblock {\em Probl. Peredachi Inf.}, 10(3):51--61.

\bibitem[Carpentier et~al., 2015]{carpentier_upper-confidence-bound_2015}
Carpentier, A., Lazaric, A., Ghavamzadeh, M., Munos, R., Auer, P., and Antos, A. (2015).
\newblock Upper-{Confidence}-{Bound} {Algorithms} for {Active} {Learning} in {Multi}-{Armed} {Bandits}.
\newblock arXiv:1507.04523 [cs].

\bibitem[Castro and Nowak, 2008]{castro_minimax_2008}
Castro, R.~M. and Nowak, R.~D. (2008).
\newblock Minimax {Bounds} for {Active} {Learning}.
\newblock {\em IEEE Transactions on Information Theory}, 54(5):2339--2353.

\bibitem[Cohn et~al., 1994]{cohn_improving_1994}
Cohn, D., Atlas, L., and Ladner, R. (1994).
\newblock Improving generalization with active learning.
\newblock {\em Machine Learning}, 15(2):201--221.

\bibitem[Dong et~al., 2025]{dong_addressing_2025}
Dong, E., Schein, A., Wang, Y., and Garg, N. (2025).
\newblock Addressing discretization-induced bias in demographic prediction.
\newblock {\em PNAS Nexus}, 4(2):pgaf027.

\bibitem[Elzayn et~al., 2025]{elzayn_measuring_2025}
Elzayn, H., Smith, E., Hertz, T., Guage, C., Ramesh, A., Fisher, R., Ho, D.~E., and Goldin, J. (2025).
\newblock Measuring and {Mitigating} {Racial} {Disparities} in {Tax} {Audits}*.
\newblock {\em The Quarterly Journal of Economics}, 140(1):113--163.

\bibitem[Félix-Medina, 2003]{felix-medina_asymptotics_2003}
Félix-Medina, M.~H. (2003).
\newblock Asymptotics in adaptive cluster sampling.
\newblock {\em Environmental and Ecological Statistics}, 10(1):61--82.

\bibitem[Ganti and Gray, 2013]{ganti_building_2013}
Ganti, R. and Gray, A.~G. (2013).
\newblock Building {Bridges}: {Viewing} {Active} {Learning} from the {Multi}-{Armed} {Bandit} {Lens}.
\newblock arXiv:1309.6830 [cs].

\bibitem[Hanneke, 2011]{hanneke_rates_2011}
Hanneke, S. (2011).
\newblock Rates of {Convergence} in {Active} {Learning}.
\newblock {\em The Annals of Statistics}, 39(1):333--361.
\newblock Publisher: Institute of Mathematical Statistics.

\bibitem[Hanneke and Yang, 2010]{hanneke_negative_2010}
Hanneke, S. and Yang, L. (2010).
\newblock Negative {Results} for {Active} {Learning} with {Convex} {Losses}.
\newblock In {\em Proceedings of the {Thirteenth} {International} {Conference} on {Artificial} {Intelligence} and {Statistics}}, pages 321--325. JMLR Workshop and Conference Proceedings.
\newblock ISSN: 1938-7228.

\bibitem[Hanneke and Yang, 2014]{hanneke_minimax_2014}
Hanneke, S. and Yang, L. (2014).
\newblock Minimax {Analysis} of {Active} {Learning}.
\newblock arXiv:1410.0996 [cs].

\bibitem[Henderson et~al., 2022]{henderson_beyond_2022}
Henderson, P., Chugg, B., Anderson, B., and Ho, D.~E. (2022).
\newblock Beyond {Ads}: {Sequential} {Decision}-{Making} {Algorithms} in {Law} and {Public} {Policy}.
\newblock In {\em Proceedings of the 2022 {Symposium} on {Computer} {Science} and {Law}}, pages 87--100, Washington DC USA. ACM.

\bibitem[Jo et~al., 2025]{jo_not_2025}
Jo, N., ~, Andrea, V., ~, Derek, O., , and Ho, D.~E. (2025).
\newblock Not ({Officially}) in {My} {Backyard}: {Characterizing} {Informal} {Accessory} {Dwelling} {Units} and {Informing} {Housing} {Policy} {With} {Remote} {Sensing}.
\newblock {\em Journal of the American Planning Association}, 91(1):30--45.
\newblock Publisher: Routledge \_eprint: https://doi.org/10.1080/01944363.2024.2345730.

\bibitem[Puchkin and Zhivotovskiy, 2022]{puchkin_exponential_2022}
Puchkin, N. and Zhivotovskiy, N. (2022).
\newblock Exponential {Savings} in {Agnostic} {Active} {Learning} through {Abstention}.
\newblock {\em IEEE Transactions on Information Theory}, 68(7):4651--4665.
\newblock arXiv:2102.00451 [cs, math, stat].

\bibitem[Seber and Mohammad~Salehi, 2015]{seber_adaptive_2015}
Seber, G. A.~F. and Mohammad~Salehi, M. (2015).
\newblock Adaptive {Sampling}.
\newblock In {\em Wiley {StatsRef}: {Statistics} {Reference} {Online}}, pages 1--11. John Wiley \& Sons, Ltd.
\newblock \_eprint: https://onlinelibrary.wiley.com/doi/pdf/10.1002/9781118445112.stat05692.pub2.

\bibitem[Shin et~al., 2019]{shin_are_2019}
Shin, J., Ramdas, A., and Rinaldo, A. (2019).
\newblock Are sample means in multi-armed bandits positively or negatively biased?
\newblock arXiv:1905.11397 [math].

\bibitem[Thompson, 1991]{thompson_stratified_1991}
Thompson, S.~K. (1991).
\newblock Stratified {Adaptive} {Cluster} {Sampling}.
\newblock {\em Biometrika}, 78(2):389--397.
\newblock Publisher: [Oxford University Press, Biometrika Trust].

\bibitem[U.S.~E.O., 2021]{us_eo_2021_advancing_2021}
U.S.~E.O., . (2021).
\newblock Advancing {Racial} {Equity} and {Support} for {Underserved} {Communities} {Through} the {Federal} {Government}.

\end{thebibliography}
\end{document}